\newtheorem{proposition}{Proposition}
\theoremstyle{definition}
\algnewcommand{\LineComment}[1]{\State \(\#\) #1}
\theoremstyle{plain}% default
\newtheorem*{thm*}{Theorem}
\newtheorem*{prop*}{Proposition}
\title{Towards Scalable and Stable Parallelization of Nonlinear RNNs}
\author{Xavier Gonzalez\textsuperscript{1,2}, Andrew Warrington\textsuperscript{1,2,3}, Jimmy T.H. Smith\textsuperscript{2,4,5}, Scott W. Linderman\textsuperscript{1, 2} \\
\textsuperscript{1}Department of Statistics, Stanford University. \\
\textsuperscript{2}Wu Tsai Neurosciences Institute, Stanford University. \\
\textsuperscript{3}GE Healthcare. \,
\textsuperscript{4}ICME, Stanford University. \,
\textsuperscript{5}Liquid AI. \\
\texttt{\{xavier18,scott.linderman\}@stanford.edu}
}
\begin{document}

\maketitle

\begin{abstract}
Transformers and linear state space models can be evaluated in parallel on modern hardware, but evaluating nonlinear RNNs appears to be an inherently sequential problem. 
Recently, however, \citet{deer2024} developed an approach called DEER, which evaluates nonlinear RNNs in parallel by posing the states as the solution to a fixed-point problem.
They derived a parallel form of Newton's method to solve the fixed-point problem and achieved significant speedups over sequential evaluation.
However, the computational complexity of DEER is cubic in the state size, and the algorithm can suffer from numerical instability.
We address these limitations with two novel contributions.
To reduce the computational complexity, we apply quasi-Newton approximations and show they converge comparably to Newton, use less memory, and are faster.
To stabilize DEER, we leverage a connection between the Levenberg-Marquardt algorithm and Kalman smoothing, which we call ELK.
This connection allows us to stabilize Newton's method while using efficient parallelized Kalman smoothing algorithms to retain performance.
Through several experiments, we show that these innovations allow for parallel evaluation of nonlinear RNNs at larger scales and with greater stability.
\end{abstract}

\section{Introduction}
\label{sec:intro}
Parallel computation has helped fuel the rise of deep learning~\citep{hooker2021hardware}.
Architectures such as transformers~\citep{vaswani2017attention} and linear RNNs~\citep{linear-attention, smith2023s5, lru, mamba, Feng2024} are specifically designed to allow parallelization over the length of the input sequence.
However, most conventional nonlinear RNNs (e.g. Elman RNNs, GRUs~\citep{gru}, LSTMs~\citep{lstm} etc.) are not readily parallelizable over the sequence length due to their sequential architecture.
Thus, they do not benefit as much from parallel hardware.
Nonetheless, these nonlinear RNN architectures are still used widely across the scientific community~\citep{mienye2024recurrent, shchur-ifl2020, sixo}.
Furthermore, recent work has suggested that linear RNNs (and transformers) are fundamentally limited in their expressivity compared to nonlinear RNNs~\citep{illusion}.
Finally, nonlinear RNNs continue to be of significant interest in computational and theoretical neuroscience as models of neural systems~\citep{vyas2020computation, sussillo2013opening, lfads, smith2021reverse, schimel2022ilqr,Dinc2023cornn, curto2023graph, Costacurta2024neuromod}.
Therefore, scalable and stable parallelization methods for nonlinear RNNs offer significant benefits across many fields.

Towards this goal, \citet{deer2024} proposed DEER, a method for evaluating a nonlinear RNN in parallel.
DEER casts inference as finding the solution of a fixed-point equation designed specifically to capture the nonlinear dynamics of the RNN. 
Newton's method is used to solve the resulting fixed-point equation.
With good initialization, Newton's method enjoys quadratic convergence rates~\citep[Chapter 11]{NocedalWright}.
\citet{deer2024} also show that the inversion of the structured Jacobian matrix required by Newton's method can be cast as an associative parallel scan~\citep{blelloch1990prefix}. 
DEER therefore reduces the evaluation runtime over sequential evaluation by as much as factor of twenty.

However, DEER also inherits the weaknesses of Newton's method and parallel scans.
The first weakness is \emph{scalability}.
Let $D$ denote the state dimension and $T$ denote sequence length.
Using a parallel scan to evaluate updates from Newton's method, DEER inherits $\mathcal{O}(T D^2)$ memory complexity and $\mathcal{O}(T D^3)$ computational work~\citep{blelloch1990prefix}.
These costs can be prohibitive in practical deep learning settings.
The second limitation of DEER is \emph{numerical stability}, inherited from Newton's method.
In general, \emph{undamped} Newton's method does not provide global convergence guarantees and in practice often diverges~\citep{NocedalWright}.
We seek to ameliorate both of these weaknesses.

To do this, we leverage two techniques: quasi-Newton approximations and trust regions.  
Quasi-Newton approximations are a common adaptation of Newton's method, where approximate, but faster and less memory intensive updates are used instead of exact Newton steps.  
Empirically, these methods often expedite convergence in terms of wall-clock time, even though more iterations are performed. 
We apply quasi-Newton approximations to reduce the memory and compute required by DEER, and we find accelerated convergence and reduced memory consumption. 
Secondly, we leverage a connection between Newton's method with a trust region and Kalman smoothing in sequential models \citep{Sarkka-lm}.
This connection allows us to stabilize the Newton iteration by limiting the step size to the radius of the trust region, preventing large and numerically unstable steps. 
The update can be computed with a parallel Kalman smoother~\citep{parallel-kalman,dynamax}, achieving a runtime that is logarithmic in the sequence length. 
We refer to DEER accelerated with a quasi-Newton approximation as quasi-DEER, and DEER stabilized with trust regions as ``\textbf{E}valuating \textbf{L}evenberg-Marquardt via \textbf{K}alman'' (ELK). 
We then combine these approaches to yield a fast and stable algorithm, which we call quasi-ELK.

Crucially, DEER, ELK, and their ``quasi'' variants are \emph{algorithms} for parallelizing \emph{any} discrete-time nonlinear dynamical system, including stateful architectures such as RNNs, that may or may not include stochasticity. 
We use ``parallel'' to refer to the fact that each iteration of our iterative algorithm operates on the \emph{entire} $T$-length sequence (and not on each sequence element one at a time).

\begin{figure}[t]
    \begin{minipage}[t]{\textwidth}
      \begin{minipage}[b!]{0.4\textwidth}
        \centering
        \includegraphics[width=\textwidth]{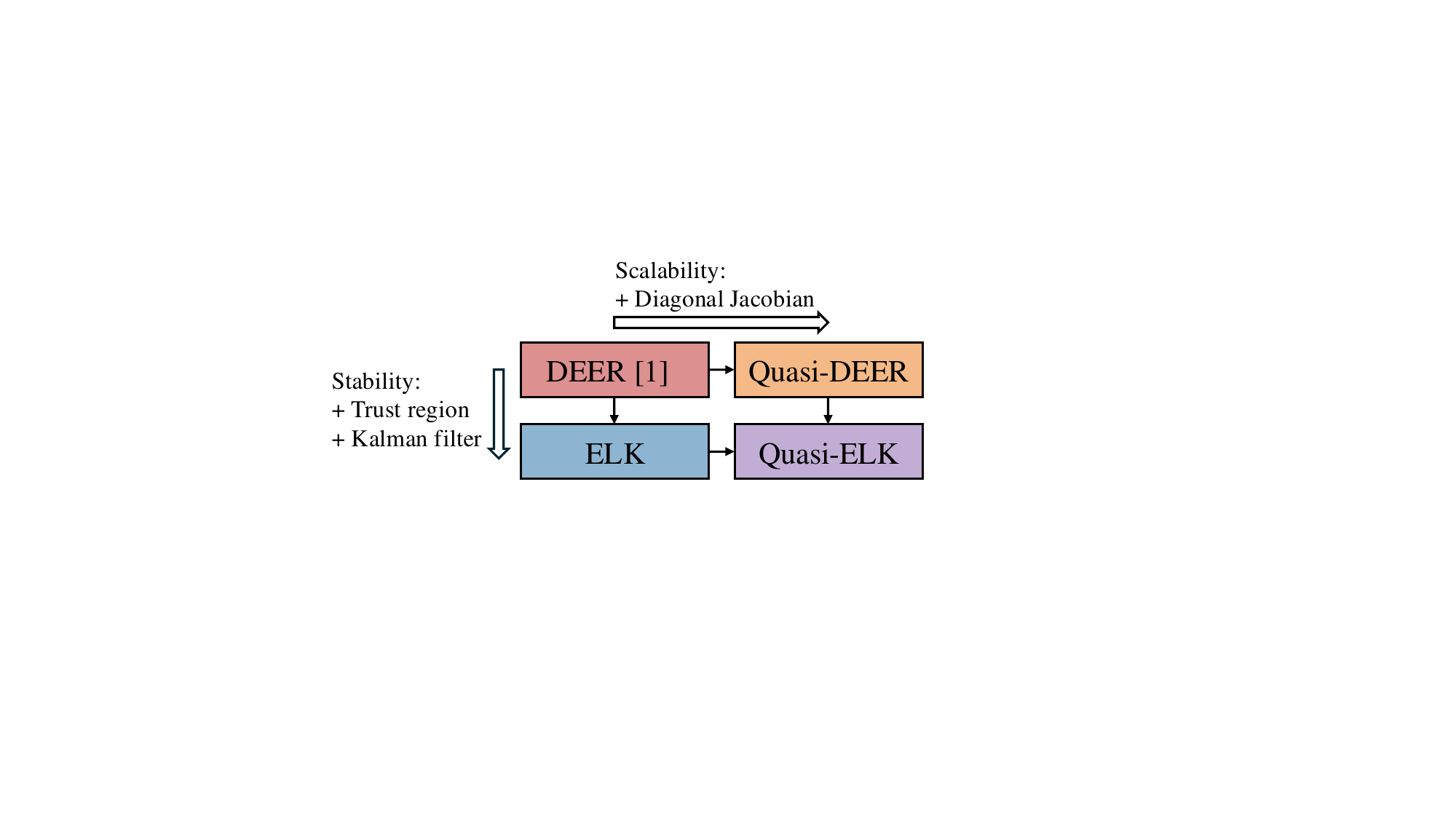}
        \captionof{figure}{Overview of the parallelizable methods we consider in this paper.  We introduce diagonal approximations to improve complexity (quasi-DEER, Section~\ref{ssc:quasi}) and link to Kalman filtering and trust regions to improve stability (ELK, Section~\ref{ssc:ELK}).  We combine these ideas in quasi-ELK (Section~\ref{ssc:ELK}).}
      \end{minipage}%
      \hfill%
      \begin{minipage}[b!]{0.55\textwidth}
        \centering
        \setlength{\tabcolsep}{5pt}
        \captionof{table}{Description of the relative strengths and weaknesses of the five evaluation methods we consider.  We include a discussion of this in Section \ref{sec:conc}.
        }
        \small{
        \begin{tabular}{@{}lclll@{}}
            \toprule
            \multirow{2}{*}[-0.4em]{\textbf{Method}} & \multicolumn{4}{c}{\textbf{Desiderata}}          \\ \cmidrule(l){2-5} 
                                    & Parallel          & Work                      & Memory            & Stability    \\ \midrule
            Sequential              & No                & $\mathcal{O}(T D^2)$      & $\mathcal{O}(D)$        & Very high     \\ \midrule
            DEER~\citep{deer2024}   & Yes               & $\mathcal{O}(T D^3)$      & $\mathcal{O}(TD^2)$     & Low           \\
            Quasi-DEER              & Yes               & $\mathcal{O}(T D  )$      & $\mathcal{O}(TD)$       & Low           \\
            ELK                     & Yes               & $\mathcal{O}(T D^3)$      & $\mathcal{O}(TD^2)$     & High          \\
            Quasi-ELK               & Yes               & $\mathcal{O}(T D  )$      & $\mathcal{O}(TD)$       & Moderate      \\ \bottomrule
        \end{tabular}}
      \end{minipage}
    \end{minipage}
\end{figure}

We outline the key contributions and organization of the paper here. 
First, we introduce background material, particularly focusing on DEER~\citep{deer2024}, in Sections \ref{sec:background:statement} and \ref{sec:deer}. 
We then present three short novel proofs: that DEER is globally convergent; that this convergence is robust to modifications of the linearized dynamics (Proposition~\ref{prop:deer_converges}); and that there is a unique solution with no local minima (Appendices~\ref{app:deer_pf} and \ref{app:no_local_min}).
We then introduce quasi-Newton approximations to DEER to improve efficiency (quasi-DEER, Section~\ref{ssc:quasi}), and trust regions to stabilize DEER (ELK, Section~\ref{ssc:ELK})
We also provide an interpretation of how trust regions stabilize the dynamics by damping the eigenvalues of the Jacobians (Appendix~\ref{app:kf_eig_damp}).
We show empirically that quasi-DEER remains accurate, with reduced runtime and memory consumption (Section~\ref{sec:exp}).  
In regimes where DEER is numerically unstable or convergences slowly, we show ELK and quasi-ELK can enjoy fast, numerically stable convergence.
We conclude by discussing the relative strengths and weaknesses of each method, providing guidance on how to select and tune them, and highlighting avenues for future research (Section~\ref{sec:conc}). 
We provide our code at \url{https://github.com/lindermanlab/elk}.

\section{Problem Statement}
\label{sec:background:statement}

We consider nonlinear Markovian state space models, with the state at time $t$ denoted $\mathbf{s}_t \in \mathbb{R}^D$ and nonlinear transition dynamics $f : \mathbb{R}^D \rightarrow \mathbb{R}^D$.
We denote the full sequence of $T$ states as $\mathbf{s}_{1:T} \in \mathbb{R}^{T \times D}$.
Note that we primarily focus on the transition dynamics, so we suppress any (possibly random) input dependence of the model in the notation.  However, the algorithms in this paper extend easily to these situations.

For any collection of candidate states $\{\mathbf{s}_t\}_{t=1}^T$ and an \emph{initial state} $\mathbf{s}_0$ we can define the \emph{residual}
\begin{align}
    \mathbf{r}(\mathbf{s}_{1:T}) := [\mathbf{s}_1 - f(\mathbf{s}_0), \;
    \mathbf{s}_2 - f(\mathbf{s}_1), \;
    \mathbf{s}_3 - f(\mathbf{s}_2), \hdots, \mathbf{s}_T - f(\mathbf{s}_{T-1}) ] \in \mathbb{R}^{T \times D}.  \label{eq:resid}
\end{align}
This residual can be interpreted as the one-step error incurred by assuming the $t^{\text{th}}$ state is $\mathbf{s}_t$ instead of~$f(\mathbf{s}_{t-1})$.
The solution of the state space model, $\mathbf{s}^*_{1:T}$, is the only trace with zero residual. Equivalently, it is the unique solution to the fixed-point equation
\begin{align}\label{eq:fixed_point}
    \mathbf{r}(\mathbf{s}^*_{1:T}) = \mathbf{0}.
\end{align}
The conventional way to obtain $\mathbf{s}^*_{1:T}$ is to apply $f$ sequentially $T$ times.  
Sequential evaluation always yields a valid trace, but it requires $\mathcal{O}(T)$ sequential operations (i.e. computational depth or span), and hence does not fully leverage the capabilities of parallel hardware. 
We aim to compute $\mathbf{s}^*_{1:T}$ in sublinear time using parallel computation.

\paragraph{Jacobian of the Residual}
For notational brevity, we overload $\mathbf{s}$ and $\mathbf{r}$ to also denote vectors in $\mathbb{R}^{TD}$, representing flattened versions of $\mathbf{s}_{1:T}$ and $\mathbf{r}_{1:T}$.
We can therefore write the Jacobian of the residual for the whole sequence, $J(\mathbf{s})$, as a $TD \times TD$ matrix with block bidiagonal structure of the form
\begin{align}\label{eq:drds}
    J(\mathbf{s}) :=  \frac{\partial \mathbf{r}}{\partial \mathbf{s}}(\mathbf{s}) = \begin{pmatrix}
        I_D & 0 & \hdots & 0 & 0\\
        -\frac{\partial f}{\partial \mathbf{s}} (\mathbf{s}_1) & I_D & \hdots & 0 & 0\\
        \vdots & \vdots & \ddots & \vdots & \vdots \\
        0 & 0 & \hdots & I_D & 0 \\
        0 & 0 & \hdots & -\frac{\partial f}{\partial \mathbf{s}} (\mathbf{s}_{T-1}) & I_D \\
    \end{pmatrix}.
\end{align}

\section{DEER: Newton's Method for Parallel Evaluation of Sequential Models}
\label{sec:deer}
\citet{deer2024} propose DEER, an algorithm using Newton's method for parallel evaluation of nonlinear sequential models, including both discrete-time nonlinear RNNs (GRUs, LSTMs, etc.) and neural ODEs~\citep{chen2018neural, kidger2021on}.
In this paper, we focus on the discrete-time setting, and address questions that arise from \citet{deer2024}:  how to \emph{scale} Newton's method, and how to make it \emph{numerically stable}. 

In this section we introduce DEER. 
We begin with a simplified derivation that emphasizes the link between Newton's method on vector spaces and parallelizable linear recurrences.
We then present a new proof that DEER theoretically \emph{always} converges globally.
This proof also highlights why global convergence can be numerically unstable and/or slow in practice.
We conclude by using these insights to discuss the weaknesses of DEER, and to motivate the methods we develop in Section~\ref{sec:methods}.

\subsection{Derivation of DEER from Newton's Method}
\label{ssc:derivation}
The original derivation of DEER used Newton's method on Banach spaces and the Fr\'echet derivative for continuous-time systems to derive the update~\citep{deer2024}. 
We specialize to the setting of discrete-time RNNs, and present a streamlined derivation that more directly connects the structure of the Jacobian in \eqref{eq:drds} to the linear recurrence relation in \eqref{eq:lin_rr}.
This connection highlights why DEER incurs cubic work in $D$ and may encounter numerical instabilities.
We will also use this form to prove DEER's global convergence in Section~\ref{ssc:deer_global_converge}.

The $i^{\text{th}}$ Newton iterate for \eqref{eq:fixed_point}, starting at $\mathbf{s}^{(i)}$, is given by
\begin{align}\label{eq:newton}
    \mathbf{s}^{(i+1)} \leftarrow \mathbf{s}^{(i)} - J(\mathbf{s}^{(i)})^{-1} \, \mathbf{r}(\mathbf{s^{(i)}}),
\end{align}
or equivalently,
\begin{align}\label{eq:solve}
    \Delta \mathbf{s}^{(i+1)} \coloneq \mathbf{s}^{(i+1)} - \mathbf{s}^{(i)} = -J(\mathbf{s}^{(i)})^{-1}\, \mathbf{r}(\mathbf{s}^{(i)}).
\end{align}
Note this uses the root-finding view of Newton's method, see Appendix~\ref{app:root_v_opt}.

The Jacobian defined in \eqref{eq:drds} is invertible and all of the eigenvalues are equal to one.\footnote{To see this fact, note that the characteristic polynomial of $J(\mathbf{s})$ in \eqref{eq:drds} is $(\lambda - 1)^{TD}$.} 
Storing and naively inverting the Jacobian is infeasible for large $D$ or $T$.
However, since $J(\mathbf{s})$ is block bidiagonal, we can solve for $\Delta \mathbf{s}$ in \eqref{eq:solve} by forward substitution.
This reduces to a simple recursion with the initial condition $\Delta \mathbf{s}_1^{(i+1)} = -\mathbf{r}_1(\mathbf{s}^{(i)})$, and for $t > 1$,
\begin{align}\label{eq:lin_rr}
    \Delta \mathbf{s}_t^{(i+1)} = 
    \left[\frac{\partial f}{\partial \mathbf{s}}(\mathbf{s}_{t-1}^{(i)})\right] \, \Delta \mathbf{s}_{t-1}^{(i+1)} - \mathbf{r}_t(\mathbf{s}^{(i)}).
\end{align}
DEER uses the linearity of this recursion, solving it in parallel with a parallel associative scan~\citep{deer2024,smith2023s5,blelloch1990prefix}.  Therefore, with $\mathcal{O}(T)$ processors, each Newton iteration can be performed in $\mathcal{O}(\log T)$ time.

We emphasize that the computation of the Newton step $\Delta \mathbf{s}$ in \eqref{eq:solve} is being parallelized. $J$ would, in general, be a $TD \times TD$ matrix that is prohibitive to store or invert.  But by formulating this solve as an LDS in \eqref{eq:lin_rr}, we are able to parallelize the computation of $\Delta s$ (which consists of $T$ state updates, each of dimension $D$) over the sequence length. With sufficient processors, each update in \eqref{eq:solve} can be computed in $\mathcal{O}(\log T)$ time. Our implementation uses the parallel associative scan from JAX~\citep{jax2018github} (see Appendix \ref{app:scan}).

\subsection{Global Convergence of DEER}
\label{ssc:deer_global_converge}

We present a proof that DEER converges globally for discrete-time RNNs to the solution $\mathbf{s}^*_{1:T}$ of \eqref{eq:fixed_point} in at most $T$ steps.
\begin{proposition}\label{prop:deer_converges}
    Undamped Newton's method will converge to the true solution, $\mathbf{s}^*_{1:T}$, of the fixed-point equation \eqref{eq:fixed_point} in at most $T$ Newton iterations, for any initial $\mathbf{s}^{(0)}_{1:T}$.
\end{proposition}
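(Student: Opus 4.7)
The plan is to prove \Cref{prop:deer_converges} by induction on the iteration count $i$, showing that the Newton update propagates correctness one time step per iteration: after $i$ Newton steps, the first $i$ components $\mathbf{s}^{(i)}_1, \dots, \mathbf{s}^{(i)}_i$ coincide with the true solution $\mathbf{s}^*_1, \dots, \mathbf{s}^*_i$. The motivation is the block-bidiagonal structure of $J(\mathbf{s})$ in \cref{eq:drds}: once the earlier states are exact, the corresponding blocks of the residual vanish, and the forward-substitution recurrence \cref{eq:lin_rr} simply propagates a zero correction up to the first still-erroneous index, where it applies exactly the missing $f$-step.

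First I would handle the base case $i=1$. Substituting into \cref{eq:lin_rr} gives $\Delta \mathbf{s}^{(1)}_1 = -\mathbf{r}_1(\mathbf{s}^{(0)}) = f(\mathbf{s}_0) - \mathbf{s}^{(0)}_1$, so $\mathbf{s}^{(1)}_1 = f(\mathbf{s}_0) = \mathbf{s}^*_1$ irrespective of the initialization. For the inductive step I would assume the hypothesis $\mathbf{s}^{(i)}_t = \mathbf{s}^*_t$ for every $t \le i$ and then unroll \cref{eq:lin_rr} during iteration $i+1$ with an inner induction on $t$. For $t \le i$, the hypothesis together with the identity $\mathbf{s}^*_t = f(\mathbf{s}^*_{t-1})$ forces $\mathbf{r}_t(\mathbf{s}^{(i)}) = 0$, so \cref{eq:lin_rr} yields $\Delta \mathbf{s}^{(i+1)}_t = 0$ for all $t \le i$ (with the $t=1$ case already handled by the base argument). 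Plugging $\Delta \mathbf{s}^{(i+1)}_i = 0$ into the recurrence at $t = i+1$ gives
\begin{equation*}
\Delta \mathbf{s}^{(i+1)}_{i+1}
= \left[\tfrac{\partial f}{\partial \mathbf{s}}(\mathbf{s}^{(i)}_i)\right]\cdot \mathbf{0} \;-\; \mathbf{r}_{i+1}(\mathbf{s}^{(i)})
= f(\mathbf{s}^*_i) - \mathbf{s}^{(i)}_{i+1}
= \mathbf{s}^*_{i+1} - \mathbf{s}^{(i)}_{i+1},
\end{equation*}
so $\mathbf{s}^{(i+1)}_{i+1} = \mathbf{s}^*_{i+1}$, closing the outer induction. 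Iterating $T$ times yields $\mathbf{s}^{(T)} = \mathbf{s}^*$.

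There is no real obstacle; the only subtlety is that the forward-substitution derivation of \cref{eq:lin_rr} requires the Jacobian blocks $\partial f / \partial \mathbf{s}(\mathbf{s}^{(i)}_{t-1})$ to be well-defined and finite along every Newton iterate, which is exactly the hypothesis of the proposition. I would emphasize that although $\partial f / \partial \mathbf{s}$ appears in the recurrence, in the critical $t=i+1$ step it multiplies the zero vector $\Delta \mathbf{s}^{(i+1)}_i$, which is why the argument does not require any contractivity or spectral condition on $f$ and gives unconditional global convergence, in contrast to the usual quadratic-convergence story for Newton's method. This same observation will be useful in \Cref{ssc:quasi} as intuition for why diagonal quasi-Newton approximations inherit much of the same behavior.
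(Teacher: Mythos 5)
Your proof is correct and follows essentially the same route as the paper's: an induction showing that each Newton iteration zeroes the residual at one additional time index, because the forward-substitution recurrence \cref{eq:lin_rr} propagates a zero correction through the already-correct prefix and then applies exactly the one-step residual at the first erroneous index. Your version just makes the indexing slightly more explicit (outer induction on $i$, inner induction on $t$), and correctly identifies that finiteness of the Jacobians is needed only so that the Jacobian block times the zero vector is zero.
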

\begin{proof}[Proof sketch]
    For the full proof by induction, see Appendix~\ref{app:deer_pf}.
   The structure of $J(\mathbf{s})$ determines the recurrence in \eqref{eq:lin_rr}.
   The update applied at time $t$, $\Delta \mathbf{s}_t^{(i+1)}$, from \eqref{eq:lin_rr} is the summation of a linearized $f$ applied to the update at time $t-1$, and the residual one-step error at time $t$.  
    Therefore, if the previous timestep is correct (i.e. $\Delta \mathbf{s}_{t-1}^{(i+1)} = \mathbf{0}$), then the update at time $t$ is just the one-step residual, which is defined exactly as the error. 
    Therefore, if the previous value is correct, the updated current value will be correct.  
    Given that $f$ and $\mathbf{s}_0$ are fixed and known, the result follows that all $T$ timesteps will have zero residual after $T$ iterations.
\end{proof}
It is not immediately obvious from \eqref{eq:newton} or past work~\citet{deer2024} that DEER converges globally, but Proposition~\ref{prop:deer_converges} shows that it does, at least theoretically.  
This result has two crucial corollaries.  
First, after $i$ Newton iterations, $\mathbf{s}^{(i)}_{1:T}$ will have zero error for all $t \leq i$.
Therefore, if the iteration encounters numerical instabilities, as Newton is prone to, we can simply use a heuristic of resetting $s_t^{(i)}$ to a finite value for all~$t > i$.
This preserves the solution for time indices $t \leq i$ and allows the optimization to continue, but it is equivalent to running Newton's method from scratch on~$\mathbf{s}_{i:T}$.
This process is repeated until the entire trace has zero residual.  
A second corollary is that \textit{any} set of finite matrices can replace $\left\{ \nicefrac{\partial f}{\partial \mathbf{s}} \right\}_{t=2}^T$ in \eqref{eq:drds} or \eqref{eq:lin_rr}, and the resulting quasi-Newton method will still converge globally in at most $T$ iterations.
This preservation of global convergence provides further motivation for exploring quasi-Newton methods, as we discuss in the next section.

\subsection{Weaknesses of DEER}
\label{ssc:deer_weakness}
Despite the theoretical convergence of DEER, its formulation as a linear recurrence relation in \eqref{eq:lin_rr} highlights its limited scalability and stability.  
Scalability is limited because, in general, $\nicefrac{\partial f}{\partial \mathbf{s}}$ is a dense~$D \times D$ matrix. 
Therefore the parallel associative scan, which uses matrix-matrix multiplications, has~$\mathcal{O}(T D^3)$ computational work and $\mathcal{O}(T D^2)$ memory complexity.
Stability is limited because we often have no control over the eigenvalues of $\nicefrac{\partial f}{\partial \mathbf{s}}$.
If sufficiently many of these eigenvalues over the sequence length are larger in magnitude than one, then the linear recurrence relation will be numerically unstable. The heuristic approach of resetting unstable values is sufficient to ensure global convergence, but as we show in Section~\ref{ssc:ar_gru}, it comes at the cost of runtime, as convergence is dramatically slower. 
These weaknesses motivate the development of two new techniques for parallel evaluation of RNNs: quasi-DEER and ELK, which we discuss in the next section.

\section{Scaling and Stabilizing Newton's Method for Parallel Evaluation}
\label{sec:methods}
In Section~\ref{ssc:quasi} we introduce quasi-DEER, a quasi-Newton method that addresses the intractability of DEER for large state sizes.
In Section~\ref{ssc:ELK} we introduce Evaluating Levenberg-Marquardt with Kalman (ELK), a damped Newton method for numerically stable, parallel evaluation of nonlinear RNNs.
We also introduce {quasi-ELK}, which combines {quasi-DEER} and {ELK} to create a damped Newton's method for parallel sequential evaluation that is scalable and numerically stable.

\subsection{Quasi-DEER: Scaling DEER with Diagonal Jacobian Approximations}
\label{ssc:quasi}
As a consequence of our Proposition~\ref{prop:deer_converges}, replacing the Jacobians $\left\lbrace \nicefrac{\partial f}{\partial \mathbf{s}} \right\rbrace_{t=2}^T$ with an arbitrary matrix will still result in global convergence of the resulting DEER-like algorithm in at most $T$ iterations.
A straightforward way to reduce the computational cost is to replace $\left\lbrace \nicefrac{\partial f}{\partial \mathbf{s}} \right\rbrace_{t=2}^T$ with $\left\lbrace \mathrm{diag}\left(\nicefrac{\partial f}{\partial \mathbf{s}}\right) \right\rbrace_{t=2}^T$, i.e. take the diagonal entries of the Jacobians of the dynamics functions.
The resulting linear recursion requires only $\mathcal{O}(T D)$ memory because it only needs to store diagonal matrices, and $\mathcal{O}(T D)$ work, because the parallelized associative scan only uses element-wise vector multiplies. 
Position-wise matrix-vector multiplies are still required to obtain the residuals, but this computation can be naively parallelized across the sequence.

Quasi-Newton methods approximate the Jacobian for computational reasons, so we refer to this algorithm as quasi-DEER. 
In Section~\ref{sec:exp}, we show that quasi-DEER outperforms DEER on wall-clock time and memory usage on the tests from~\citet{deer2024}.
Quasi-DEER improves the scalability of DEER, but it does not address stability concerns. We propose a more stable solution below.

\subsection{ELK: Stabilizing DEER with Trust Regions}
\label{ssc:ELK}
Rather than treating RNN evaluation as a fixed-point finding problem, let us instead consider it as an optimization problem. First, we define the \textit{merit function}
\begin{align}\label{eq:merit}
    \mathcal{L}(\mathbf{s}) \coloneq \frac{1}{2} \left\| \mathbf{r}(\mathbf{s}) \right\|_2^2.
\end{align}
As in the fixed-point formulation, the unique minimizer of this objective is $\mathbf{s}^*$.
In fact, the only local minimum of the merit function \eqref{eq:merit} is $\mathbf{s}^*$, as proved in Proposition~\ref{prop:no_local_min} in Appendix~\ref{app:no_local_min}.
One way of minimizing this \textit{nonlinear} sum of squares objective is via the Gauss-Newton algorithm~\citep{NocedalWright}, which alternates between linearizing the terms in the merit function and solving the resulting \textit{linear} sum-of-squares problem. The linearized objective at iteration $i$ is
\begin{align} \label{eq:linmerit}
    \widetilde{\mathcal{L}}_{\mathbf{s}^{(i)}}(\Delta \mathbf{s}) = \frac{1}{2} \left\| \mathbf{r}(\mathbf{s}^{(i)}) + J(\mathbf{s}^{(i)}) \Delta \mathbf{s} \right\|_2^2.
\end{align}
The solution is $\Delta \mathbf{s}^{(i+1)} = -J(\mathbf{s}^{(i)})^{-1} \, \mathbf{r}(\mathbf{s}^{(i)})$, which is exactly the DEER update from \eqref{eq:solve}.

Formulating evaluation as nonlinear least squares also suggests more stable algorithms.  The Levenberg-Marquardt algorithm~\citep{NocedalWright} uses updates that solve a constrained optimization problem
\begin{align}
    \min_{\Delta \mathbf{s}} \widetilde{\mathcal{L}}_{\mathbf{s}^{(i)}}(\Delta \mathbf{s}) \quad \text{ subject to } \left\| \Delta \mathbf{s} \right\|_2 \leq D_{i+1},
\end{align}
where $D_{i+1}$ is an upper bound on the step size. We recognize this constraint as a trust region, which is often used in conjunction with Newton's method to improve numerical stability and convergence.

Finally, minimizing this constrained optimization is equivalent to minimizing the Lagrangian
\begin{align}\label{eq:lagrangian}
     \widehat{\mathcal{L}}(\Delta \mathbf{s}, \lambda_{i+1}) &= 
     \widetilde{\mathcal{L}}_{\mathbf{s}^{(i)}}(\Delta \mathbf{s}) + \frac{\lambda_{i+1}}{2} \left\| \Delta \mathbf{s} \right\|_2^2
\end{align}
for some $\lambda_{i+1} \geq 0$.
As noted by \citet{Sarkka-lm}, the minimizer of this Lagrangian can be obtained by a Kalman smoother. 
We emphasize this connection in the following proposition.
\begin{proposition}\label{prop:ELK}
    Solving for the Levenberg-Marquardt update that minimizes \eqref{eq:lagrangian} with fixed $\lambda_{i+1}$ is equivalent to finding the \emph{maximum a posteriori} (MAP) estimate of $\mathbf{s}_{1:T}$ in a linear Gaussian state space model, which can be done in $\mathcal{O}(\log T)$ time on a sufficiently large parallel machine.
\end{proposition}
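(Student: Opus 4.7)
The plan is to exhibit an explicit linear Gaussian state space model (LGSSM) whose MAP objective coincides with the Lagrangian \eqref{eq:lagrangian} up to an additive constant, and then invoke a parallel Kalman smoother to obtain the $\mathcal{O}(\log T)$ runtime. First, I would exploit the block-bidiagonal structure of $J(\mathbf{s}^{(i)})$ to expand the linearized residual term in \eqref{eq:linmerit} as
\begin{align*}
    \widetilde{\mathcal{L}}_{\mathbf{s}^{(i)}}(\Delta \mathbf{s}) = \tfrac{1}{2}\, \|\mathbf{r}_1 + \Delta \mathbf{s}_1\|_2^2 + \tfrac{1}{2}\sum_{t=2}^T \Big\| \mathbf{r}_t + \Delta \mathbf{s}_t - \tfrac{\partial f}{\partial \mathbf{s}}(\mathbf{s}_{t-1}^{(i)})\, \Delta \mathbf{s}_{t-1} \Big\|_2^2,
\end{align*}
and similarly rewrite the trust-region penalty as $\tfrac{\lambda_{i+1}}{2}\sum_t \|\Delta \mathbf{s}_t\|_2^2$. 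This cleanly separates the objective into pairwise couplings between consecutive $\Delta \mathbf{s}_t$ plus per-timestep quadratic penalties, which is exactly the structure of a negative log-posterior in a chain-structured Gaussian graphical model.

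Second, I would construct the matching LGSSM. Let the latent variables be $\mathbf{x}_t \coloneq \Delta \mathbf{s}_t$, with initial prior $\mathbf{x}_1 \sim \mathcal{N}(-\mathbf{r}_1, I_D)$, linear-Gaussian dynamics
\begin{align*}
    \mathbf{x}_t = \tfrac{\partial f}{\partial \mathbf{s}}(\mathbf{s}_{t-1}^{(i)})\, \mathbf{x}_{t-1} - \mathbf{r}_t + \mathbf{w}_t, \quad \mathbf{w}_t \iid\sim \mathcal{N}(0, I_D),
\end{align*}
and pseudo-observations $\mathbf{y}_t = \mathbf{0} = \mathbf{x}_t + \mathbf{v}_t$ with $\mathbf{v}_t \sim \mathcal{N}(0, \lambda_{i+1}^{-1} I_D)$. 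A direct computation of $-\log p(\mathbf{x}_{1:T}, \mathbf{y}_{1:T})$ recovers $\widehat{\mathcal{L}}(\Delta \mathbf{s}, \lambda_{i+1})$ up to a constant independent of $\Delta \mathbf{s}$, so the unique minimizer of the Lagrangian coincides with the MAP estimate of $\mathbf{x}_{1:T}$. Because the model is linear and Gaussian, MAP, posterior mean, and smoothed mean all agree, and can be obtained by any Kalman smoother.

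Third, I would appeal to the parallel Kalman smoothing results of \citet{parallel-kalman} (as implemented in \citet{dynamax}), which express both the forward filter and the backward smoother as associative scans over $T$ elements. Using Blelloch's parallel scan on $\mathcal{O}(T)$ processors yields a depth of $\mathcal{O}(\log T)$, giving the claimed parallel runtime.

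The main obstacle is the bookkeeping in the second step: one must match the constant offsets in the dynamics (the $-\mathbf{r}_t$ drift and the $-\mathbf{r}_1$ prior mean) so that the cross-terms in the quadratic expansion align with the Lagrangian term-by-term, rather than only up to a gradient-zero perturbation. Once this algebraic identification is verified, the remainder of the proof is a direct appeal to standard Kalman smoothing theory and to the parallel scan formulation.
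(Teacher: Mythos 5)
Your proof is correct and follows essentially the same route as the paper: expand the quadratic Lagrangian along the block-bidiagonal structure of $J$, recognize the result as the negative log joint of a chain-structured linear Gaussian state space model, and invoke the parallel (associative-scan) Kalman smoother for the $\mathcal{O}(\log T)$ depth. The only difference is cosmetic --- you parametrize the LGSSM in the increments $\Delta \mathbf{s}_t$ with pseudo-observations $\mathbf{0}$, whereas the paper keeps the states $\mathbf{s}_t$ as latents and treats the previous iterates $\mathbf{s}_t^{(i)}$ as the emissions; the two models are related by the affine shift $\Delta\mathbf{s}_t = \mathbf{s}_t - \mathbf{s}_t^{(i)}$ and yield the same MAP estimate.
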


\begin{proof}[Proof]
Expanding the residual and Jacobian functions in \eqref{eq:linmerit}, we see that up to an additive constant, the negative Lagrangian can be rewritten as,
\vspace{-1em}
\begin{multline} \label{eq:lgssm}
    -\widehat{\mathcal{L}}(\Delta \mathbf{s}, \lambda_{i+1})
    \stackrel{\cdot}{=} \log \mathcal{N}\left(\mathbf{s}_1 \mid f(\mathbf{s}_0), I_D \right) + \sum_{t=1}^T \log \mathcal{N}\left(\mathbf{s}_t^{(i)} \, \Big|\, \mathbf{s}_t, \frac{1}{\lambda_{i+1}} I_D \right) \\
    + \sum_{t=2}^T \log \mathcal{N}\left(\mathbf{s}_t \, \Big|\,  f(\mathbf{s}_{t-1}^{(i)}) + \left[ \frac{\partial f}{\partial \mathbf{s}}(\mathbf{s}_{t-1}^{(i)}) \right] (\mathbf{s}_{t-1} - \mathbf{s}_{t-1}^{(i)}), I_D \right),
\end{multline}
where $\mathcal{N}(\mathbf{x} \mid \boldsymbol\mu, \Sigma)$ denotes the probability density function of the multivariate normal distribution.

We recognize \eqref{eq:lgssm} as the log joint probability of a linear Gaussian state space model (LGSSM)~\citep{sarkka2013bayesian} on $(\mathbf{s}_1, \ldots, \mathbf{s}_T)$. The means of the dynamics distributions are given by the linearization of $f$, and the emissions are the previous iteration's states, $\mathbf{s}^{(i)}$. The parameter $\lambda_{i+1}$ sets the precision of the emissions, governing how far the posterior mode deviates from the previous states. 

The minimizer of \eqref{eq:lagrangian} is the posterior mode of the LGSSM \eqref{eq:lgssm}, and can be obtained by Kalman smoothing~\citep{sarkka2013bayesian}. As with the linear recursions in DEER, the Kalman smoother can be implemented as a parallel scan that scales as $\mathcal{O}(\log T)$ in time on a machine with $\mathcal{O}(T)$ processors~\citep{parallel-kalman,dynamax}.
\end{proof}

Therefore, we can evaluate an RNN by minimizing the merit function with the Levenberg-Marquardt algorithm. Since each step of the algorithm is performed by parallel Kalman smoothing, we call this approach \emph{Evaluating Levenberg-Marquardt with Kalman} (ELK).
Note that DEER is a special case of ELK, where $\lambda=0$, which can be seen as minimizing the unpenalized linearized objective \eqref{eq:linmerit}, or, alternatively, taking a Newton step with an infinitely large trust region.
Moreover, under certain conditions, ELK also enjoys global convergence guarantees~\citep[][Thms. 11.7, 11.8]{NocedalWright}.

\paragraph{Quasi-ELK: Scalability and Stability}
As with DEER, we can substitute an approximate Jacobian into the Lagrangian to obtain the \emph{quasi-ELK} algorithm. 
Quasi-ELK enjoys the compute and memory scaling of quasi-DEER, as well as stability from the trust region damping from ELK.
We show empirically in Section~\ref{ssc:ar_gru} that while quasi-ELK takes more iterates to converge than ELK, each quasi-ELK iterate is faster, giving overall runtime speedups. 

\paragraph{Implementation Details} 
The convergence rate of (quasi-)ELK depends on the trust region radius $D_{i}$ (or alternatively $\lambda_{i}$).
Although there exist methods to analytically set $\lambda_i$~\citep[Algorithm 4.3]{NocedalWright}, these approaches require factorizing $\nicefrac{\partial \mathbf{r}}{\partial \mathbf{s}}$, which is intractable at scale.
Therefore, we treat $\lambda$ as a hyperparameter set by a sweep over log-spaced values (cf. Appendix~\ref{app:hparam}).

We also use Kalman filtering instead of smoothing.
We do so for two main reasons: filtering requires less work and memory; and we also found it to converge in fewer Newton iterations than smoothing. 
We hypothesize that this follows from Proposition~\ref{prop:deer_converges}, where the early part of the trace converges first.
In Appendix~\ref{app:kf_eig_damp} we also discuss an alternative interpretation of ELK and the Kalman filter as defining a linear recurrence where the trust region attenuates the eigenvalues used in the parallel scan.

\paragraph{Limitations}
The quasi-Newton methods lose the local quadratic convergence properties of Newton but remain globally convergent (cf. Proposition~\ref{prop:deer_converges}). Our implementation of quasi-DEER for training uses approximate gradients (cf. Section~\ref{ssc:worms}). The heuristic of resetting to the states to zero when they become unstable is also motivated by Proposition~\ref{prop:deer_converges}, but it slows convergence in (quasi-)DEER methods. As a result, we develop ELK to stabilize evaluation. Like DEER, ELK has cubic complexity in $D$, which we combat with quasi-ELK. However, quasi-ELK adds an additional hyperparameter. Note that all four parallelized methods discussed in this paper, as well as sequential evaluation of RNNs, have different regimes where they are fastest. For example, in our evaluation of autoregressive RNNs (Section~\ref{ssc:ar_gru}), the ELK methods are faster than the DEER methods on wall-clock time, but they are slower than sequential. In our evaluation of the Lorenz96 system (Appendix~\ref{app:lorenz}), ELK is more stable than DEER, but DEER is faster on wall-clock time. An area for future research is characterizing the properties of dynamical systems and hardware where each method is fastest.
Finally, at the core of the implementation of the parallelized methods is the parallel associative scan (cf. Appendix~\ref{app:scan}), which is currently available in JAX \citep{jax2018github} but not in the standard PyTorch package.

\section{Related Work}
\label{sec:lit}

\paragraph{RNNs and Parallelism}  Nonlinear RNNs are a natural choice for modeling sequential data because of their inductive biases and memory efficiency.  However, most nonlinear RNNs are not parallelizable over the sequence length, and architectures that can exploit parallel computational hardware have been core to the success of deep learning.  Therefore, a range of sequence architectures that inherently admit parallelism have been proposed, including transformers~\citep{vaswani2017attention}, deep linear RNNs~\citep{linear-attention,  smith2023s5, lru, mamba, Feng2024, gu2021s4, martin2018parallelizing,  xlstm, yang2024parallelizing}, and convolutions~\citep{wavenet, ckconv, hyena, parnichkun2024state}. These methods obtain parallelism by developing new architectures, and do not consider parallelizing existing nonlinear architectures.  DEER~\citep{deer2024} is notable as it considers parallel evaluation and training of arbitrary nonlinear RNNs. 

\paragraph{Root Finding in Deep Learning}  Beyond DEER, there has been broad interest in using root finders/fixed-point iterations in deep learning and sequence modeling. 
Deep implicit layers~\citep{deep-implicit-layers, bai2019deep, bai2020multiscale, bai2021neural} and neural ODEs~\citep{chen2018neural, massaroli2021differentiable} replace conventional feed forward network layers~\citep{goodfellow2016deep} with an implicit layer whose output is the root of an equation. Moreover, \citet{song2021} parallelize the evaluation of feedforward nets using Jacobi and Gauss-Siedel iterations. In sequence modeling, parallel decoding methods~\citep{mihaylova-martins-2019-scheduled, santilli2023accelerating, fu2024break} adapt ideas from Jacobi and Gauss-Siedel iterations to evaluate autoregressive sequence models in parallel.
These approaches iteratively refine inputs by repeatedly feeding in previous outputs back into a parallelized sequence model. 
However, these methods presuppose the existence of a parallelized forward pass for the sequence model and do not leverage additional gradient information to obtain sublinear convergence.

\paragraph{Parallelizing Dynamical Systems over Time}
Other work has investigated evaluating other nonlinear dynamical systems over time.
ParaDIGMS~\citep{Shih2023} parallelizes sampling from diffusion models but uses Picard iterations instead of Newton's method, while \citet{selvam2024self_refining} use Parareal iterations. In the numerical ODE and PDE communities there has been great interest in Parallel in Time methods; see \citet{Gander2015, Ong2020} for surveys. \citet{Vargas2023} parallelized the evaluation of chaotic dynamical systems over time, but instead of casting Newton's method as a parallel scan, they resort to multi-grid methods to evaluate at different hierarchies.
Moreover, these methods have not yet been applied to parallelizing RNNs.

\paragraph{Scaling and Stabilizing Newton's Method}  
Quasi-Newton methods are efficient algorithms that use an approximation of the Jacobian or Hessian in Newton's method, and include approaches like BFGS~\citep{broyden1970convergence, fletcher1970new, goldfarb1970family, shanno1970conditioning} and L-BFGS~\citep{lbfgs}. 
Other approaches use Newton's method to optimize deep nets~\citep{kfac}.
However, these quasi-Newton algorithms do not admit efficient parallel scans.
There are also conjugate gradients methods for exploiting structured Jacobians or Hessians~\citep{Steihaug1983}, though they often do not attain the fast convergence rates of Newton or quasi-Newton methods \citep{NocedalWright}.
Methods for stabilizing and ensuring Newton's method converges globally include regularization approaches~\citep{nesterov2006cubic, doikov2023gradient}, backtracking line search~\citep{ortega2000iterative}, and trust regions~\citep{conn2000trust}.
All these stabilization methods have strengths and weaknesses, but as noted by \citet{NocedalWright}: \emph{``the trust-region Newton method has proved to be highly effective in practice,''} leading us to apply it to evaluating RNNs.

\paragraph{Nonlinear Least Squares and Kalman Smoothing}
\citet{bell-filter} and \citet{bell-smoother} draw connections between the Gauss-Newton method and the iterated extended Kalman filter and smoother~\citep{Sorenson1966, sarkka2013bayesian}.
Because Gauss-Newton is unstable, it is natural to use Levenberg-Marquardt~\citep{levenberg1944method, marquardt1963algorithm} to stabilize the filtering/smoothing problem~\citep{Sarkka-lm, Chen2013, mandel2016hybrid}.
These approaches start with a smoothing problem and stabilize it using approaches from nonlinear equations, whereas we start with a nonlinear equation to solve and make the connection with Kalman filtering to leverage parallelized algorithms~\citep{parallel-kalman}. 
We also discuss the practicalities of this connection for modern deep networks.

\section{Experiments}
\label{sec:exp}
We now experimentally examine the relative performance of these methods.
Specifically, we evaluate: 1) whether quasi-DEER can provide memory savings over DEER and runtime savings over sequential evaluation, while retaining the accuracy of training and evaluation; and 2) whether ELK and quasi-ELK can be used to stabilize evaluation in regimes where DEER is unstable.
In Sections~\ref{ssc:gru_bench} and \ref{ssc:worms} we use experimental designs from \citet{deer2024} and show that quasi-DEER retains the fast runtime and accuracy of DEER and can reduce memory consumption by up to an order of magnitude.
In Section~\ref{ssc:ar_gru} we show that (quasi-)ELK remains stable when DEER becomes unstable, and that quasi-ELK is the fastest of all parallelized methods.
We provide further details in Appendix~\ref{app:exp}.

\subsection{Quasi-DEER for Evaluation}
\label{ssc:gru_bench}
We first use an experimental design from \citet{deer2024}.
The task is to evaluate an untrained GRU across a range of hidden state sizes ($D$) and sequence lengths ($T$) on a 16GB V100 GPU; the inputs to the RNN also have dimension $D$.
We compare the wall-clock time and memory usage of three methods: sequential evaluation, DEER, and quasi-DEER.
Results are shown Figure~\ref{fig:untrained_gru}.

\begin{figure}
  \centering
  \includegraphics[width=0.95\textwidth]{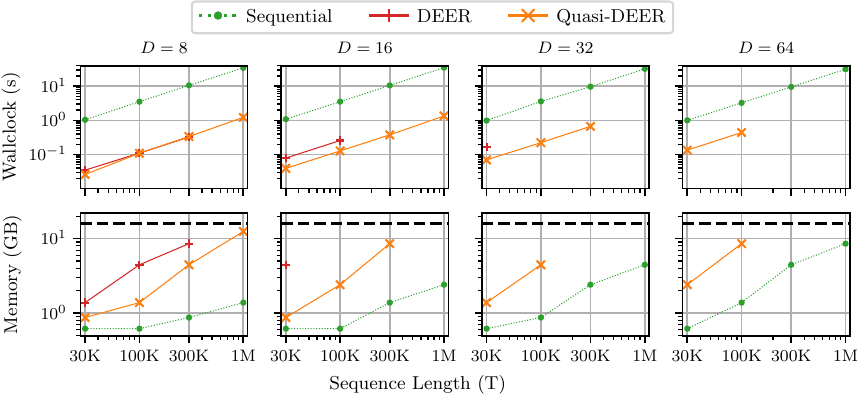}
  \vspace*{-0.0cm}
  \caption{\textbf{Evaluating an untrained GRU.} Relative performance of sequential, DEER and quasi-DEER for evaluating a randomly initialized (and untrained) GRU on (\textbf{Top Row}) wall-clock time, averaged over 20 random seeds and (\textbf{Bottom Row}) memory, averaged over 3 random seeds.
  All experiments use a 16GB V100 SMX2 (memory capacity indicated by the black dashed line) and Newton methods were run to convergence.
  Missing points in each series indicate the GPU ran out of memory. 
  Quasi-DEER has a runtime commensurate with DEER, but with lower memory consumption, allowing quasi-DEER to work at scales where DEER cannot.
  The accuracy of the final converged solution is similar for all methods (see Figure~\ref{fig:quasi_acc} in Appendix~\ref{app:exp1}).
  \label{fig:untrained_gru}
}
\end{figure}

Both DEER and quasi-DEER are up to twenty times faster than sequential evaluation.  
The runtimes are similar between DEER and quasi-DEER for small networks, because although quasi-DEER steps are faster, quasi-DEER takes more iterations to converge.  
For larger networks, the difference in runtime is more pronounced.
We also see that quasi-DEER requires as much as an order of magnitude less memory than DEER, thus allowing the application to architectural regimes previously infeasible with DEER.  
In Figure~\ref{fig:larger_regime} of Appendix~\ref{app:deer_acc} we show that in smaller $T$ and $D$ regimes we observe the expected sublinear time scaling with sequence length.
This experiment confirms that quasi-DEER can replicate the performance of DEER, but with a smaller memory footprint.  

\subsection{Quasi-DEER for Training}
\label{ssc:worms}
We verify that quasi-DEER expedites training nonlinear RNN models.
We replicate the third experiment from \citet{deer2024}, where a GRU is trained to classify \emph{C. elegans} phenotypes from the time series of principal components of the worms' body posture~\citep{eigenworms}.  

We show results in Figure~\ref{fig:eigenworms}.
We see that the training dynamics under quasi-DEER leads to the similar validation accuracy trajectories.
However, every quasi-DEER training step is faster by a factor of $2.5$, despite performing around $2$ times more Newton iterations per training step.
This finding highlights how quasi-DEER can replace DEER when training nonlinear RNNs, yielding both time and memory savings.  
In our experiment, we use the quasi-DEER approximation for the backward pass as well, leading to gradients that are different from DEER in this setting. However, we find that there is negligible degradation in performance (Figure~\ref{fig:eigenworms}, left).

DEER is prone to ``spikes'', where orders of magnitude more steps are required for convergence (Figure~\ref{fig:eigenworms}, right). 
While quasi-DEER is not as susceptible to these spikes (never more than half an order of magnitude), these instabilities motivated the study of stabilizing methods.

\begin{figure}[t]
    \centering
    \includegraphics[width=\textwidth]{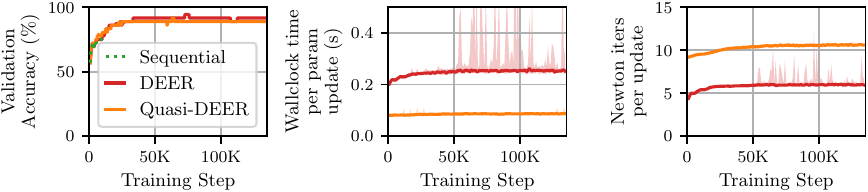}
    \caption{\textbf{Training a GRU with DEER.}
    Comparison of DEER and quasi-DEER during GRU training for the \emph{C. elegans} time-series classification task (Section~\ref{ssc:worms}).  Each time series has length $T=17,984$.  We show the median, and 5-95\% interval across a rolling window of 20 training steps.  \textbf{(Left)}  DEER and quasi-DEER have the similar validation accuracy trajectories, indicating similar training dynamics.  The sequential trace shown is for 24 hours of training (compared to 11 and 4 hours for the whole DEER and quasi-DEER traces).  \textbf{(Center)}  Each quasi training iteration is 2.5 times faster than each DEER training iteration.  Sequential training steps took more than 6 seconds each (not pictured).   \textbf{(Right)}  Each quasi training iteration requires (approximately) 2 times more Newton iterations to converge, indicating that each quasi Newton step is approximately 5 times faster than the corresponding DEER Newton step.}
    \label{fig:eigenworms}
\end{figure}

\subsection{ELK and Quasi-ELK for Evaluating Autoregressive RNNs}
\label{ssc:ar_gru}
We conclude by studying an application where these numerical instabilities in DEER are critical.
We use a small autoregressive GRU (hidden dimension $N_h=3$), where the previous sampled value is input into the GRU at the next step.
Such autoregressive architectures were not examined by \citet{deer2024}, but are an important class of models.
We describe the precise details of the AR GRU we use in Appendix~\ref{app:exp3}.
Crucially, the Markovian state $\mathbf{s}_t$ used by all four parallelized methods must be expanded to include the current sampled output value, as well as the current GRU state.  

\paragraph{Initialized AR GRU}  We first repeat the analysis in Section~\ref{ssc:gru_bench} (and similar to the evaluation in \citet{deer2024}) for evaluating a randomly initialized autoregressive GRU. 
We see in the top left panel of Figure~\ref{fig:argru} that all four parallelized methods converge rapidly and stably to the correct trace, indicated by a low mean absolute discrepancy (MAD) between the true trace and the generated trace.  

\paragraph{Trained AR GRU}  We then study a pre-trained GRU that generates a noisy sine wave (see Figure~\ref{fig:argru}, bottom). 
The linear recurrence relation \eqref{eq:lin_rr} was numerically unstable in DEER and quasi-DEER.  
To remedy these instabilities, we take the approach described earlier of setting the unstable parts of the trace to a fixed value (here zero).
Doing so ensures convergence, but at the cost of ``resetting'' the optimization for large swathes of the trace (Figure~\ref{fig:argru}, bottom) and slowing convergence (see Figure~\ref{fig:argru}, top right).
This finding highlights how the instabilities of DEER --- which are inherited from both pathologies of Newton's method and the parallel recurrence --- can be crippling in even very simple scenarios. 
While resetting allows for convergence, the resulting convergence is very slow.

\begin{figure}[t]
    \centering
    \includegraphics[width=0.95\textwidth]{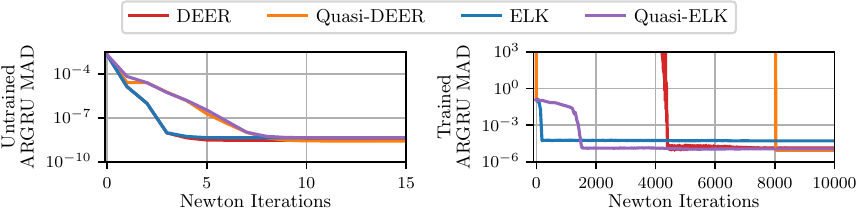}
    \includegraphics[width=0.95\textwidth]{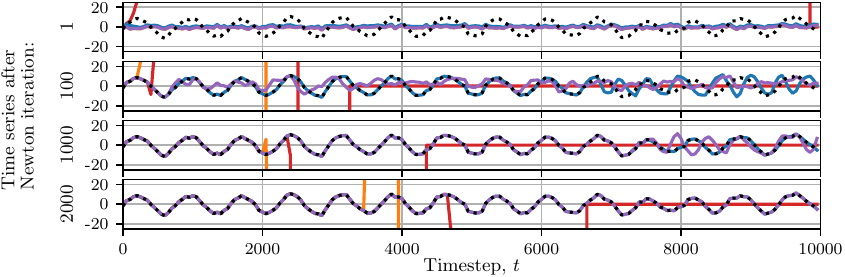}
    \vspace*{-0.0cm}
    \caption{ELK stabilizes parallel evaluation of an AR GRU.
    \textbf{(Top Left)} The mean absolute difference (MAD) evaluated on the outputs converges rapidly for all four methods on a sequence generated by an \emph{untrained} AR GRU.
    \textbf{(Top Right)} The MAD for evaluating a trained AR GRU. Undamped DEER variants are unstable and converge slowly (using the reset heuristic). ELK stabilizes and accelerates convergence.
    \textbf{(Bottom)} The output after 1, 100, 1000, and 2000 Newton iterations. The black dotted line is the true trace.  ELK and quasi-ELK converge rapidly, but DEER and quasi-DEER are unstable.  The lines where DEER and quasi-DEER are zero depict the zeroing heuristic. 
    }
    \label{fig:argru}
\end{figure} 

We then apply ELK and quasi-ELK.
We show the results in the top right and bottom panels of Figure~\ref{fig:argru}.
We select the trust region size with a one-dimensional search over log-spaced values between  $10^0$ and $10^7$.
We see ELK has stabilized convergence, with the evaluation never incurring numerical instabilities or requiring heuristics.
Crucially, by taking more stable steps (and not needing stabilizing heuristics) ELK and quasi-ELK converge faster than DEER and quasi-DEER.
ELK can stabilize and expedite the convergence of DEER, with quasi-ELK faster still (by wall-clock time).

However, on this task, all parallelized methods (including DEER) are slower than sequential generation. 
Quasi-ELK is the fastest parallel method, taking 221 milliseconds, compared to sequential evaluation, taking 96 milliseconds.
For comparison, DEER took 1,225 milliseconds. 
Quasi-ELK therefore still represents a large improvement in runtime over previous parallel methods.
We provide timing details and further discussion in Appendix~\ref{app:argru_time}.

\section{Conclusion}\label{sec:conc}
In this paper we proposed methods for scalable and stable parallel evaluation of nonlinear RNNs.
DEER~\citep{deer2024} achieved speedups over sequential evaluation, but incurred quadratic memory, cubic work, and numerical instabilities.
We therefore extended DEER to use quasi-Newton approximations that reduced computational complexity, and we provided a novel proof that both DEER and quasi-DEER converge globally.
To stabilize DEER, we leveraged a connection between the Levenberg-Marquardt method and Kalman smoothing to enable parallel evaluation of RNNs, allowing us to stabilize DEER while still leveraging fast parallel filtering.  
We empirically verified that quasi-DEER, ELK, and quasi-ELK improve convergence across a range of metrics and examples.
This result allows parallel evaluation of nonlinear RNNs to be scaled beyond what is possible with DEER.

When selecting an approach, we offer the following advice:
If rapid convergence is reliably observed, our experiments show that quasi-DEER provides the fastest convergence in terms of wall-clock time. 
However, if the dynamics are on the edge of stability, then ELK offers the most stable performance of these parallelized methods, but quasi-ELK could be faster in wall-clock time and just as stable. 
In such settings, it is worth sweeping the hyperparameter to choose the best version of ELK (note that for $\lambda=0$, ELK specializes to DEER). 
However, in the setting of chaotic dynamics, standard sequential evaluation may ultimately be faster.

Our experiments and these observations also highlight avenues for future research. 
While we found success with a diagonal approximation, structured approximations of the Jacobian that still admit fast parallelism but are more faithful approximations may allow for more accurate quasi-Newton steps to be taken.
Secondly, quantifying the convergence rates of quasi-ELK would allow us to provide tighter bounds than those derived in Proposition~\ref{prop:deer_converges}.
Finally, theoretically investigating whether further improvements to parallelized methods can prove faster than sequential evaluation for dynamical systems on the edge of stability, or whether there are fundamental limitations to the computational benefit of parallelization, are interesting questions for future work.

\newpage

\subsubsection*{Acknowledgments}
We thank John Duchi, David Zoltowski, and the members of the Linderman Lab for their helpful feedback. 
We thank Amber Hu for her work on preliminary versions of this project, and Leo Kozachkov for pointing out that the conditions we established for the merit function are equivalent to invexity. 
We thank the anonymous NeurIPS reviewers whose feedback improved this paper.

This work was supported by grants from the NIH BRAIN Initiative (U19NS113201, R01NS131987, \& RF1MH133778), the NSF/NIH CRCNS Program (R01NS130789). 
X.G. would also like to acknowledge support from the Walter Byers Graduate Scholarship from the NCAA.
S.W.L. is supported by fellowships from the Simons Collaboration on the Global Brain, the Alfred P. Sloan Foundation, and the McKnight Foundation. 
The authors have no competing interests to declare.

Some of the experiments were performed on the Sherlock cluster. We thank Stanford University and the Stanford Research Computing Center for providing computational resources and support that contributed to these research results.

\bibliography{refs}
\bibliographystyle{unsrtnat}

\clearpage
\newpage
\appendix

\section{Theoretical Results}

\subsection{Proof of Proposition~\ref{prop:deer_converges}}\label{app:deer_pf}

\textbf{Proposition 1}
\emph{Undamped Newton's method will converge to the true solution, $\mathbf{s}^*$, of the fixed-point equation\eqref{eq:fixed_point} in at most $T$ Newton iterations, for any initial $\mathbf{s}^{(0)}$.}

\begin{proof}
We prove this result by induction on the sequence length.

In general, the guess $\mathbf{s}^{(0)}$ need not equal the solution $\mathbf{s}^*$ anywhere.
However, the initial state $\mathbf{s}_0$ and the dynamics functions $f$ are fixed.
Therefore, $\mathbf{s}^*_1 = f(\mathbf{s}_0)$ and in general $\mathbf{s}^*_t = f(\mathbf{s}^*_{t-1})$.
Thus, it follows from the initial condition of the DEER recurrence relation that $\mathbf{s}^{(i)}_1 = \mathbf{s}^*_1$ for all $i \geq 1$.
    
Furthermore, we observe that if $\mathbf{s}_t^{(i)} = \mathbf{s}_t^*$ for all $t$ less than some $t^{(i)}$, then $\mathbf{r}_t(\mathbf{s}^{(i)}) = \mathbf{0}$ for all $t < t^{(i)}$ by the definition of the residual in \eqref{eq:resid}.
Therefore, the DEER linear recurrence relation necessarily gives $\Delta \mathbf{s}^{(i+1)}_t = \mathbf{0}$ for all $t < t^{(i)}$.
Furthermore, because $\mathbf{s}_{t^{(i)}}^* = f(\mathbf{s}_t^{(i)})$, it follows that $\Delta \mathbf{s}_{t^{(i)}}^{(i+1)} = -\mathbf{r}_{t^{(i)}}(\mathbf{s}^{(i)}) = \mathbf{s}^*_{t^{(i)}} - \mathbf{s}^{(i)}_{t^{(i)}}$. 
Thus, it follows that after applying another Newton iteration that $\mathbf{s}_t^{(i+1)} = \mathbf{s}_t^*$ for all $t < t^{(i)} + 1$. 
The global convergence result and bound on Newton iterates follows by induction.
\end{proof}

We note that this proof technique (induction) is very similar to that used to prove Proposition 1 of \citet{Shih2023}. However, \citet{Shih2023} proves a result about Picard iterations (zeroth order method). Our proof about the global convergence of Newton iterations contains the additional complication of dealing with a LDS (as a consequence of using a first order method). Note, that the global convergence comes from the zeroth order update; however, the first order gradient information can accelerate convergence.
The fact that both Picard and Newton iterations have very similar proofs of global convergence suggests that they are closely related, and we will expand on this connection in future work.

\paragraph{Discussion of corollaries of Proposition 1}

Corollaries of Proposition \ref{prop:deer_converges} include that the fixed-point iterations will converge (in at most $T$ iterations) to $\mathbf{s}^*$ even if the Jacobians $\nicefrac{\partial f}{\partial \mathbf{s}}$ are replaced by \emph{arbitrary} matrices, and that we can reset the values of $\mathbf{s}^{(i)}_t$ for $t > i$ arbitrarily and still enjoy convergence.

In more detail, the elements in the sub-block-diagonal of $J := \nicefrac{\partial \mathbf{r}}{\partial \mathbf{s}}$ can be replaced with arbitrary values – but the main block diagonal must remain as the identity and all other entries must be zero. Retaining convergence under modifications to the sub-block-diagonal portion is a corollary of Proposition~\ref{prop:deer_converges}, and can be seen from \eqref{eq:lin_rr}: If all the states up to and including position $t-1$ at the $(i)$th Newton iteration are correct, then the update in \eqref{eq:lin_rr} at Newton iteration $(i+1)$ for position $t$ will use $\Delta \mathbf{s}_{t-1}^{(i+1)} = \mathbf{0}$ (no update is required at position $t-1$), and so the update to $\mathbf{s}_{t}^{(i+1)}$ no longer depends on the Jacobian. 

We exploit this to develop quasi-DEER, retaining only the diagonal of the Jacobians. Doing so reduces the parallel scan from $\mathcal{O}(D^3)$ work to $\mathcal{O}(D)$ work, making each iteration faster (while still admitting global convergence as above), but needs more Newton iterations to converge due to approximate updates. We find that this trade-off often yields a faster wall-clock time (cf. Figure~\ref{fig:supp1}).

Explicitly, the global convergence of quasi-DEER is a theoretical result (a corollary of Proposition~\ref{prop:deer_converges}), but the fast runtime of quasi-DEER in practice is an empirical result (cf. Figure~\ref{fig:eigenworms}).

\subsection{The Merit Function Has No Local Minima or Saddle Points}\label{app:no_local_min}

\begin{proposition}\label{prop:no_local_min}
    The merit function $\mathcal{L}(\mathbf{s})$ defined in \eqref{eq:merit} has a global minimum at the true trace $\mathbf{\mathbf{s}}^*$, satisfying $\mathcal{L}(\mathbf{s}^*) = 0$. It has no other critical points, i.e. no $\mathbf{s}$ such that $\nabla \mathcal{L}(\mathbf{s}) = \mathbf{0}$ other than at the unique $\mathbf{s}^*$ for which $\mathbf{r}(\mathbf{s}^*) = \mathbf{0}$.
\end{proposition}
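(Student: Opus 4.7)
The plan is to compute the gradient of $\mathcal{L}$ and exploit the block-bidiagonal structure of $J(\mathbf{s})$ to show that critical points of the merit function coincide exactly with roots of the residual. Since $\mathcal{L}(\mathbf{s}) = \tfrac{1}{2} \|\mathbf{r}(\mathbf{s})\|_2^2$, the chain rule gives
\begin{align*}
    \nabla \mathcal{L}(\mathbf{s}) = J(\mathbf{s})^\trans \, \mathbf{r}(\mathbf{s}),
\end{align*}
so the critical point condition $\nabla \mathcal{L}(\mathbf{s}) = \mathbf{0}$ reduces to $J(\mathbf{s})^\trans \mathbf{r}(\mathbf{s}) = \mathbf{0}$. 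The task is then to show this forces $\mathbf{r}(\mathbf{s}) = \mathbf{0}$.

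The key observation is that, by \cref{eq:drds}, $J(\mathbf{s})$ is block lower bidiagonal with identity blocks $I_D$ on the diagonal. Its determinant is therefore $1$, and provided the Jacobians $\partial f / \partial \mathbf{s}$ are finite (the standing assumption already used in \Cref{prop:deer_converges}), $J(\mathbf{s})$ is invertible for every $\mathbf{s}$. Hence $J(\mathbf{s})^\trans$ is invertible as well, and $J(\mathbf{s})^\trans \mathbf{r}(\mathbf{s}) = \mathbf{0}$ if and only if $\mathbf{r}(\mathbf{s}) = \mathbf{0}$.

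Finally, I would invoke uniqueness of the fixed point: the residual equation $\mathbf{r}(\mathbf{s}) = \mathbf{0}$ is equivalent to the system $\mathbf{s}_t = f(\mathbf{s}_{t-1})$ for $t = 1, \ldots, T$, which, given the fixed initial state $\mathbf{s}_0$, is uniquely solved by sequentially applying $f$. This produces $\mathbf{s}^*$. Since $\mathcal{L} \geq 0$ everywhere and $\mathcal{L}(\mathbf{s}^*) = 0$, the point $\mathbf{s}^*$ is a global minimum, and by the argument above it is the only critical point, so no other local minima exist.

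There is no real obstacle here; the proof is essentially a one-line consequence of the structure of $J(\mathbf{s})$. The only subtlety worth flagging is the finite-Jacobian assumption (needed to ensure $J(\mathbf{s})$ is well-defined and hence invertible), which is the same regularity condition assumed throughout the paper. Consequently, I would present the proof as a short paragraph noting the gradient formula, citing the block-bidiagonal structure of $J(\mathbf{s})$ from \cref{eq:drds}, and concluding with uniqueness of the sequential trace.
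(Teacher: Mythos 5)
Your proposal is correct and follows essentially the same route as the paper's proof: both reduce the critical-point condition to $J(\mathbf{s})^\trans \mathbf{r}(\mathbf{s}) = \mathbf{0}$, deduce nonsingularity of $J(\mathbf{s})$ from its unit-lower-triangular structure (you via the determinant, the paper via the eigenvalues --- an immaterial difference), and conclude by uniqueness of the zero-residual trace. No gaps.
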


\begin{proof}
    First, we observe that $\nabla \mathcal{L}(\mathbf{s}) = J(\mathbf{s})^T \mathbf{r}(\mathbf{s})$, where $J(\mathbf{s})$ is defined as in \eqref{eq:drds}.
    Because $J(\mathbf{s})$ is a lower triangular matrix with all entries on its diagonal equal to 1, it follows that all of its eigenvalues are equal to 1. 
    Therefore, $J(\mathbf{s})$ is nonsingular for all $\mathbf{s}$.
    Thus, $J(\mathbf{s})$ has trivial nullspace for all $\mathbf{s}$, i.e. $J(\mathbf{s})^T \mathbf{r}(\mathbf{s}) = \mathbf{0} \iff \mathbf{r}(\mathbf{s}) = \mathbf{0}$. 
    But only $\mathbf{s}^*$ satisfies $\mathbf{r}(\mathbf{s}^*) = \mathbf{0}$.
\end{proof}

Since there are no critical points other than $\mathbf{s}^*$, the merit function cannot have local minima or saddle points. Since we have shown that every stationary point is a global minimum, it follows that the merit function $\mathcal{L}(\mathbf{s})$ is \emph{invex} (cf. \cite{Craven1985, BenIsrael1986}). 

We also discuss further the uniqueness of the global minimizer $\mathbf{s}^*$ of the merit function $\mathcal{L}$.

For a deterministic forward function $f$ and fixed inputs there is a fixed sequence of states and outputs (note that any stochastic dynamics function can be reparameterized as deterministic by conditioning on the random inputs). Thus, $\mathbf{s^*}$ is the only sequence with zero residual (i.e. there is a unique sequence generated by the deterministic dynamics). 

Furthermore, DEER cannot get stuck at any point that is not this sequence. We prove this in Proposition~\ref{prop:deer_converges}. Another way to see this however is that each update step \eqref{eq:newton} is equal to $\mathbf{J}^{-1} \mathbf{r}$. But, $\mathbf{J}$ is always invertible and so has trivial nullspace. Furthermore, the residual $\mathbf{r}$ can only be zero at the unique solution $\mathbf{s}^*$. Thus $\mathbf{J}^{-1} \mathbf{r}$ is nonzero everywhere except at the true solution, where it is zero. Thus, DEER cannot get stuck en route to finding the true and unique solution.

\subsection{Kalman Filtering Damps the Eigenvalues of the Dynamics Matrices}\label{app:kf_eig_damp}

A complementary perspective on how ELK results in more stable evaluation of nonlinear RNNs is to see how the Kalman filter damps the eigenvalues of the Jacobian matrices of the transition dynamics.  We first provide a high-level overview, and then provide a more detailed derivation.

\paragraph{Overview} Let $\mathbf{A}_t$ be the Jacobians $\nicefrac{\partial f}{\partial \mathbf{s}}$ used in the linear recurrence relations and $\mathbf{b}_t$ be the offsets. Then the prediction step of the Kalman filter (ELK) is the same as DEER. However, after applying the update step in ELK (which imposes the trust region), we obtain a second linear recurrence relation where the linear operator is given by $\boldsymbol\Gamma_t \mathbf{A}_T$. Note that $\boldsymbol\Gamma_t$ is a symmetric positive definite matrix with eigenvalues bounded above by $\nicefrac{1}{1 + \lambda}$. Thus, by the Spectral Theorem, it follows that the norms of the eigenvalues of $\boldsymbol\Gamma_t \mathbf{A}_t$ are bounded above by the max of the norms of the eigenvalues of $\mathbf{A}_t$, scaled by $\nicefrac{1}{1+\lambda}$. Note that larger $\lambda$ corresponds to more regularization/smaller trust region; and therefore correspondingly results in smaller effective eigenvalues in the scan. We recover DEER exactly if $\lambda=0$. Thus, while large eigenvalues in $A_t$ are the cause of the instability of DEER when evaluating unstable dynamical systems, ELK directly attenuates these large eigenvalues, explaining why the intermediate iterations using ELK remain stable.

\paragraph{Derivation}
We define our dynamics used in Newton iteration $(i+1)$ as
\begin{align*}
    \mathbf{A}_t & = \dfrac{ \partial f_{t+1}}{ \partial \mathbf{s}}(\mathbf{s}_t^{(i)}) \\
    \mathbf{b}_t & = f_{t+1}(\mathbf{s}_t^{(i)}) - \dfrac{ \partial f_{t+1}}{ \partial \mathbf{s}}(\mathbf{s}_t^{(i)}) \mathbf{s}_t^{(i)}.
\end{align*}
Now $\mathbf{A}_t \in \mathbb{R}^{D \times D}$ and $\mathbf{b}_t \in \mathbb{R}^D$.

In line with considering the system as the LDS in \eqref{eq:lgssm}, we set the process noise to be $\mathbf{I}_D$, and with the emissions governed by
\begin{equation*}
    \mathbf{s}_t^{(i+1)} \sim \mathcal{N}(\mathbf{s}_t^{(i)}, \sigma^2 \mathbf{I}_D),
\end{equation*}
where $\sigma^2$ controls the size of our trust region (note that in the notation of developed in Section~\ref{ssc:ELK} we have $\lambda = 1/\sigma^2$).

In the notation of \citet{murphy}, we see that the \textbf{predict step} is
\begin{align*}
    \boldsymbol\mu_{(t+1)|t} & = \mathbf{J}_t \boldsymbol\mu_{t|t} + \mathbf{b}_t \\
    \boldsymbol\Sigma_{(t+1)|t} & = \mathbf{A}_t \boldsymbol\Sigma_{t|t} \mathbf{A}_t^T + \mathbf{I}_D.
\end{align*}

Meanwhile, the \textbf{update step} is
\begin{align*}
    \boldsymbol\mu_{(t+1)|(t+1)} & = \boldsymbol\mu_{(t+1)|t} +  \boldsymbol\Sigma_{(t+1)|t} (\boldsymbol\Sigma_{(t+1)|t} + \sigma^2 \mathbf{I}_D)^{-1}(\mathbf{y}_{t+1} - \boldsymbol\mu_{(t+1)|t}) \\
    \boldsymbol\Sigma_{(t+1)|(t+1)} & = \boldsymbol\Sigma_{(t+1)|t} - \boldsymbol\Sigma_{(t+1|t)} (\boldsymbol\Sigma_{(t+1|t)} + \sigma^2 \mathbf{I}_D) \boldsymbol\Sigma_{(t+1|t)}^T.
\end{align*}

To unpack this further, we first define the \emph{attenuation matrix}
\begin{equation*}
    \boldsymbol\Gamma_t = \sigma^2 \Big(\mathbf{A}_t \boldsymbol\Sigma_{t|t} \mathbf{A}_t^T + (\sigma^2 + 1) \mathbf{I}_D\Big)^{-1}.
\end{equation*}

Because $\boldsymbol\Sigma_{t|t}$ is a covariance matrix, it is also symmetric positive definite, and so $\mathbf{A}_t \boldsymbol\Sigma_{t|t} \mathbf{A}_t^T$ is symmetric positive definite, and so all of its eigenvalues are greater than zero. Therefore, all the eigenvalues of  $\mathbf{A}_t \boldsymbol\Sigma_{t|t} \mathbf{A}_t^T + (\sigma^2 + 1) \mathbf{I}_D$ are greater than $\sigma^2 + 1$.

We note that $\boldsymbol\Gamma_t$ is also symmetric and positive definite. Thus, by the Spectral Theorem, all eigenvalues of $\boldsymbol\Gamma_t$ are positive. By the above argument, the eigenvalues of $\boldsymbol\Gamma_t$ are all less than $\frac{\sigma^2}{1 + \sigma^2} < 1$.

Thus, we observe that the resulting filtering is given by the recurrence relation
\begin{align*}
    \boldsymbol\mu_{(t+1)|(t+1)} & = \overbrace{\boldsymbol\Gamma_t \mathbf{A}_t  \boldsymbol\mu_{t | t}}^{\text{linear}}
    +
    \overbrace{\boldsymbol\Gamma_t \mathbf{b}_t + (\mathbf{A}_t \boldsymbol\Sigma_{t|t} \mathbf{A}_t^T + \mathbf{I}_D) \Big(\mathbf{A}_t \boldsymbol\Sigma_{t|t} \mathbf{A}_t^T + (\sigma^2 + 1) \mathbf{I}_D\Big)^{-1} \mathbf{y}_{t+1}}^{\text{offset}} \\
    \boldsymbol\Sigma_{(t+1)|(t+1)} & = \boldsymbol\Gamma_t (\mathbf{A}_t \boldsymbol\Sigma_{t|t} \mathbf{A}_t^T + \mathbf{I}_D).
\end{align*}
Given $\Big\{ \boldsymbol\Sigma_{t|t} \Big\}_{t=0}^{T-1}$, we see that the filtered means (the updates for ELK) come from a linear recurrence relation with linear term $\boldsymbol\Gamma_t \mathbf{A}_t$.

We therefore compare the eigenvalues of $\boldsymbol\Gamma_t \mathbf{A}_t$ to eigenvalues of $\mathbf{A}_t$.
Because $\boldsymbol\Gamma_t$ is symmetric positive definite, by the Spectral Theorem we can write $\boldsymbol\Gamma_t = \mathbf{Q} \boldsymbol\Lambda_t \mathbf{Q}^T$, where $\mathbf{Q}$ is an orthogonal (and therefore unitary) matrix, and $\boldsymbol\Lambda_t$ is a diagonal matrix where every entry is in $(0,1)$ (the entries of $\boldsymbol\Lambda_t$ are the eigenvalues of $\boldsymbol\Gamma_t$, which are greater than $0$ by the Spectral Theorem and less than $\frac{\sigma^2}{1 + \sigma^2} < 1$ by the argument above).

Now, let's consider any arbitrary unit vector $\mathbf{v} \in \mathbb{C}^D$, and let $\boldsymbol\Lambda_t^{\mathrm{max}}$ denote the maximum of the norms of all eigenvalues of $\mathbf{A}_t$.
Then $\| \mathbf{A}_t \mathbf{v} \|_2 \leq \boldsymbol\Lambda_t^{\mathrm{max}}$ by the definition of $\boldsymbol\Lambda_t^{\mathrm{max}}$.
However, we want to know $\| \mathbf{Q} \boldsymbol\Lambda_t \mathbf{Q}^T \mathbf{A}_t \mathbf{v} \|_2$ for any arbitrary unit vector $\mathbf{v} \in \mathbb{R}^D$.
However, we know that the action of a unitary matrix cannot change the 2-norm of a vector, so $\| \mathbf{Q} \boldsymbol\Lambda_t \mathbf{Q}^T \mathbf{A}_t \mathbf{v} \|_2 = \| \boldsymbol\Lambda_t \mathbf{Q}^T \mathbf{A}_t \mathbf{v} \|_2$. Moreover, multiplying a vector by a diagonal matrix cannot increase the 2-norm of a vector by more than the absolute value of the diagonal matrix, which in the case of $\boldsymbol\Lambda_t$ is bounded above by 
$\nicefrac{\sigma^2}{\sigma^2 + 1}$.
Thus, $\| \mathbf{Q} \boldsymbol\Lambda_t \mathbf{Q}^T \mathbf{A}_t \mathbf{v} \|_2 \leq \frac{\sigma^2}{\sigma^2 + 1} \| \mathbf{A}_t \mathbf{v} \|$, or
\begin{equation*}
    \| \mathbf{Q} \boldsymbol\Lambda_t \mathbf{Q}^T \mathbf{A}_t \mathbf{v} \|_2 \leq \frac{\sigma^2}{1 + \sigma^2} \boldsymbol\Lambda_t^{\mathrm{max}}
\end{equation*}
for any unit vector $\mathbf{v} \in \mathbb{C}^D$.  This highlights that we can interpret reducing $\sigma^2$ (reducing the size of the trust region and increasing stabilization) as directly attenuating the eigenvalues in the linear recurrence, helping to combat eigenvalues with large magnitude.

\subsection{Scale-ELK}

Motivated by our derivation in Appendix \ref{app:kf_eig_damp}, which shows that ELK reduces the magnitude of the eigenvalues of the Jacobian matrices in the transition dynamics, we recommend a more lightweight version of ELK which we call \emph{scale-ELK}.

Scale-ELK uses a hyperparameter $k \in [0,1]$ (as opposed to $\lambda \in [0, \infty)$ used by ELK). Scale-ELK uses a linear dynamical system just like DEER, with the dynamics defined as
\begin{align*}
    \mathbf{A}_t & = (1-k) \dfrac{ \partial f_{t+1}}{ \partial \mathbf{s}}(\mathbf{s}_t^{(i)}) \\
    \mathbf{b}_t & = f_{t+1}(\mathbf{s}_t^{(i)}) - (1-k) \dfrac{ \partial f_{t+1}}{ \partial \mathbf{s}}(\mathbf{s}_t^{(i)}) \mathbf{s}_t^{(i)}.
\end{align*}
Thus, setting $k=0$ recovers DEER, while setting $k=1$ recovers a (computationally expensive form of) sequential evaluation. Ideally, $k$ is chosen to keep the magnitude of the eigenvalues of $\{ A_t\}_{t=1}^T$ below 1. By Proposition \ref{prop:deer_converges}, scale-ELK also enjoys global convergence.

Scale-ELK enjoys two primary benefits over ELK. First, an evaluation of scale-ELK uses fewer FLOPs than ELK, as scale-ELK is just parallelizing an LDS with ELK uses a parallelized Kalman filter. Second, the Kalman filter involves inverses which run the risk of introducing numerical instability, while scale-ELK avoids these complications.

\section{Experimental Details}\label{app:exp}

\subsection{Quasi-DEER for Evaluation}\label{app:exp1}

In this section we elaborate on our Experiment 1, discussed in Section~\ref{ssc:gru_bench}.
We closely follow the experimental design in Section 4.1 of \citet{deer2024}, including 5 warm-up steps for all timing experiments and a batch size of 16. However, instead of running 5 seeds for 20 repetitions each, we run 20 seeds for 5 repetitions each, to get more coverage over different evaluation (as the timing for each evaluation are observed to be low variance). 
We also include memory profiling experiments not present in \citet{deer2024}. For these experiments we use 3 random seeds and only one repetition, because the memory usage is extremely stable in between runs. We discus the memory profiling experiments in more detail in Section~\ref{app:mem_prof}.

For runs with the same specifications (sequence length $T$, hidden state size $D$, and algorithm), we observe that sometimes runs with memory profiling ran out of memory whereas runs with timing profiling did not run out of memory.
A difference between our time profiling runs and memory profiling runs was how we handled preallocation of memory. For time profiling, we allowed JAX to preallocate memory because that is how JAX usually runs, and so gives a better indication of wall-clock time in practice. For memory profiling, we did not allow JAX to preallocate memory so we could get a more fine-grained measure of memory usage.

However, JAX provides this following discussion of memory preallocation (see \url{https://jax.readthedocs.io/en/latest/gpu_memory_allocation.html#}):
\begin{quote}
    However, [not preallocating memory] is more prone to GPU memory fragmentation, meaning a JAX program that uses most of the available GPU memory may OOM with preallocation disabled.
\end{quote}
Because the specifications where the time profile runs stay within memory but the memory runs run out of memory are likely very close to the 16GB threshold, our hypothesis is that this phenomenon is a manifestation of this documented memory fragmentation.

\subsubsection{Numerical Precision of DEER and Quasi-DEER}\label{app:deer_acc}

In Figure~\ref{fig:quasi_acc} we qualitatively show that for the same example used in Figure 3 of \citet{deer2024} that quasi-DEER converges within numerical precision to the correct trace in the untrained GRU benchmarking task discussed in Section~\ref{ssc:gru_bench}.
Similar results for DEER can be found in Section 4.1 of \citet{deer2024}.

\begin{figure}
  \centering
  \includegraphics[width=\textwidth]{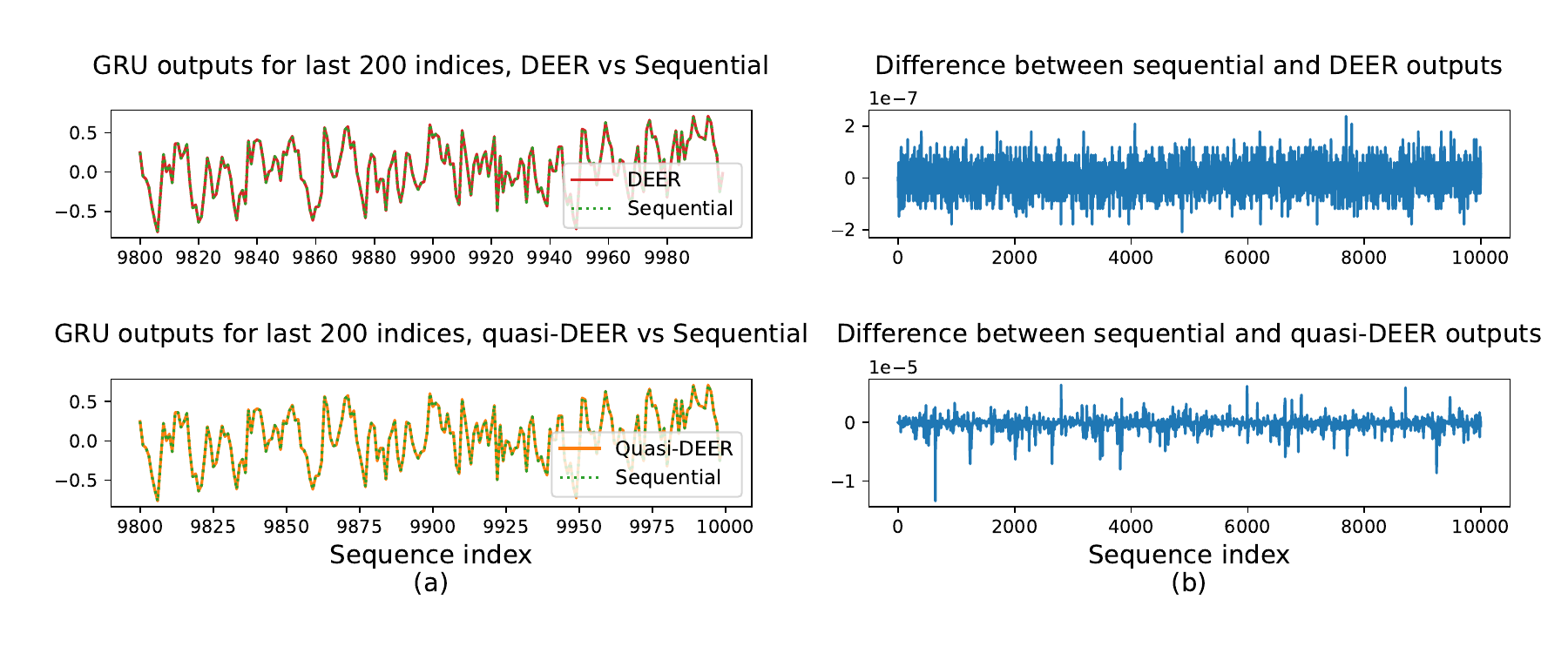}
  \caption{The accuracy of evaluating with parallelized methods (DEER and quasi-DEER) as opposed to sequential evaluation.
  The parallelized methods converge to the correct trace within numerical precision.
  The hidden state size is $D=4$ and the sequence length is $T=10,000$.
}\label{fig:quasi_acc}
\end{figure}

\subsubsection{Different Scaling Regimes Depending on GPU Saturation}\label{app:scale_regimes}

In Figure~\ref{fig:larger_regime}, we run the timing benchmarks of Section~\ref{ssc:gru_bench} on a wider range of sequence lengths $T$ and hidden state sizes $D$, on a larger GPU (a V100 with 32 GB) and with a smaller batch size of 1.
In doing so, we highlight that the parallel nature of DEER and quasi-DEER, as their wall-clock time scales sublinearly in the sequence length $T$ in smaller ($D$, $T$) regimes.
However, we note that in the larger regimes considered in our main text and in \citet{deer2024}, we often observe linear scaling in the sequence length $T$ for the wall-clock time of DEER and quasi-DEER, even though these algorithms are still faster than sequential evaluation. 
Figure~\ref{fig:larger_regime} shows good evidence that these parallel algorithms are suffering from saturation of the GPU, and would benefit from even more optimized parallel hardware

The parallel scan, given sufficiently many processors, scales as $O(\log T)$. As we show in Figure~\ref{fig:larger_regime}, we see this speedup at low model sizes and sequence lengths. Once the processors are saturated, we see a linear increase in the runtime (since the amount of work done is linear), but it is making much more effective use of the GPU, resulting in a constant factor speedup over sequential application at larger model sizes/sequence lengths. 

\begin{figure}
  \centering
  \includegraphics[width=\textwidth]{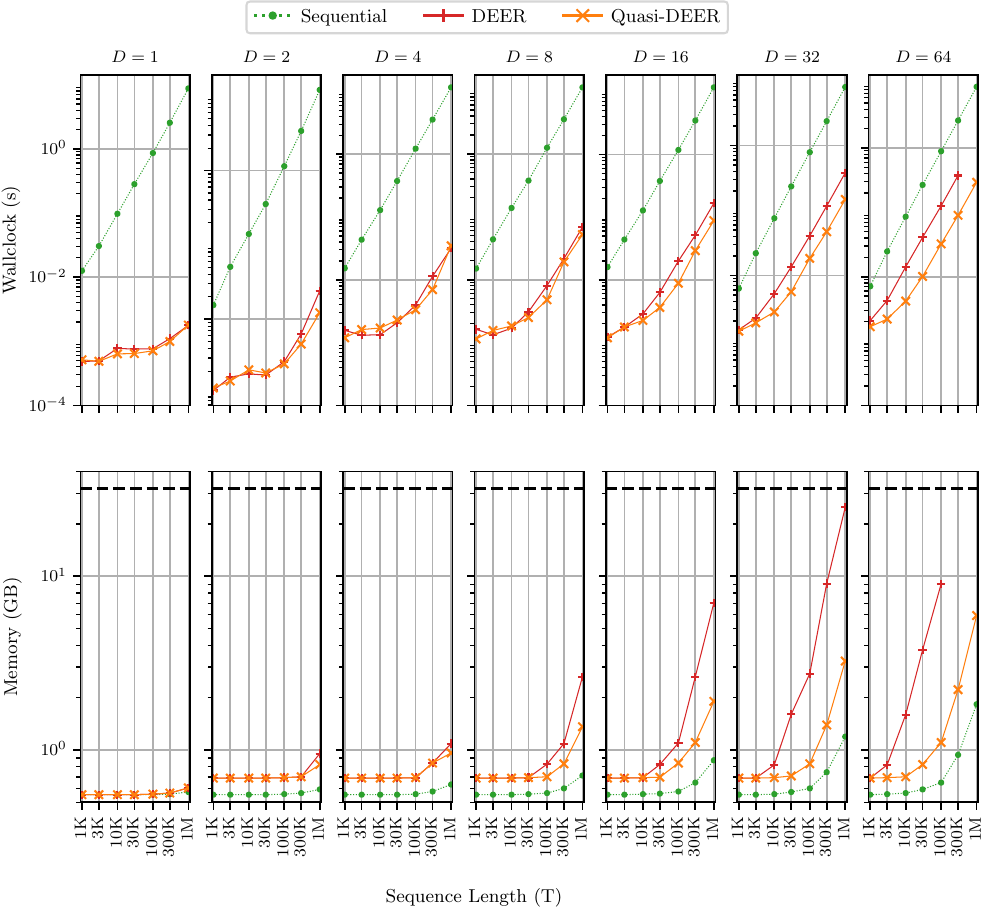}
  \caption{\textbf{Evaluating an untrained GRU.} Sublinear and linear timing regimes for parallelized algorithms. The above experiments were run on a 32 GB V100 with a batch size of 1.
  As in Figure~\ref{fig:untrained_gru}, we use 20 seeds for timing, 3 seeds for memory, and the dashed black line indicates the memory capacity of the GPU (32 GB).
  We observe that in smaller regimes in $D$ and $T$ that the wall-clock time shows sublinear scaling indicative of the use of parallel algorithms.
  However, when the GPU becomes saturated, the benefits of parallelization are reduced and we begin to see linear scaling in wall-clock time with $T$.}
  \label{fig:larger_regime}
\end{figure} 

\subsubsection{Memory Profiling Details}\label{app:mem_prof}

As we discussed in Section~\ref{ssc:quasi}, quasi-DEER is $\mathcal{O}(T D)$ in memory while DEER is $\mathcal{O}(T D^2)$ in memory because DEER uses dense Jacobians $\nicefrac{\partial f}{\partial \mathbf{s}}$ while quasi-DEER uses a diagonal approximation, $\mathrm{diag}(\nicefrac{\partial f}{\partial \mathbf{s}})$.
However, to implement quasi-DEER with automatic differentiation, the most standard approach would be to compute the dense Jacobian, and then to take the diagonal; however, such an approach would still be $\mathcal{O}(T D^2)$ in memory required. 
There are two implementation workarounds.
One is to loop over computing partial derivatives, effectively trading time for memory.
The second is simply derive the diagonal entries of the Jacobian for the architecture of interest.
For the purpose of showcasing the $\mathcal{O}(TD)$ memory usage of quasi-DEER in Section~\ref{ssc:gru_bench}, we take this second approach, deriving the diagonal entries of the Jacobian of the GRU nonlinear dynamics and implementing them in JAX. 
However, for our other experiments, where memory capacity is not a problem, we simply use the less memory efficient version of quasi-DEER.

We also see linear memory scaling in evaluating the RNN sequentially. This behavior occurs because we JIT compile a \texttt{lax.scan} in JAX, and we track the maximum memory used on the GPU at any point in the computation. Because the inputs and the hidden states of the RNN scales are both of length $T$, the memory usage of $\mathcal{O}(T)$.
While there may be more memory efficient ways to sequentially evaluate an RNN, we keep the same benchmarking structure as \citet{deer2024} for to make comparison easier.

\subsection{Quasi-DEER for Training}\label{app:exp2}

Here we discuss the experimental details for Experiment 2 in Section~\ref{ssc:worms}. We follow the same experimental setup as in Section 4.3 and Appendix B.3 of \citet{deer2024}.
As an aside, we note that the choice of hardware can impact behavior of the algorithms dramatically.
For replicability, we run on the same hardware as \citet{deer2024}, using a 16GB V100 SXM2.
However, we note that if we try to run these same experiments on A100, DEER struggles to converge numerically, although quasi-DEER shows no such difficulty. 
If we run on a CPU, both DEER and quasi-DEER converge numerically.
On balance, on the eigenworms time series classification task, both DEER and quasi-DEER are numerically stable for the most part; the numerical instabilities we have observed for DEER on an A100 are likely specific to some particular detail of JAX/hardware interaction.

In our implementation of quasi-DEER for this experiment, for ease of implementation we do not use the memory efficient version, i.e. we instantiate the full Jacobian and then take the diagonal of it. Nonetheless, we still demonstrate speed gains. Directly implementing the diagonal derivative would likely lead to further speed and memory gains.

The RNN used in this experiment is a 5 layer GRU. When we evaluate this architecture in parallel, we evaluate each layer in parallel using (quasi)-DEER. In Figure \ref{fig:eigenworms} (right), we report the number of (quasi)-DEER iterations averaged over all layers and batches. In Figure \ref{fig:fig3_supp}, we report the number of iterations needed for convergence in the last layer only.

\begin{figure}[ht]
    \centering
    \includegraphics[width=0.5\textwidth]{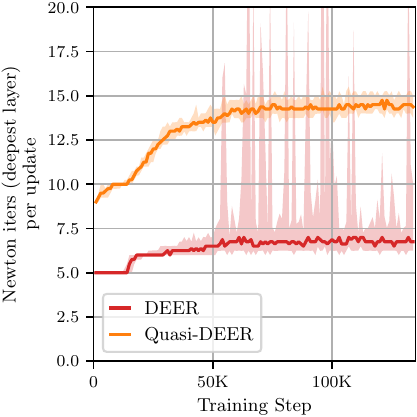}
    \caption{The number of Newton iterations needed for the deepest layer of the 5 layer GRU used for the time-series classification task in Section \ref{ssc:worms}.}
    \label{fig:fig3_supp}
\end{figure}

\subsection{ELK and Quasi-ELK for Evaluating Autoregressive RNNs}\label{app:exp3}

Here we discuss our experimental details for our Experiment 3, discussed in Section~\ref{ssc:ar_gru}.

\subsubsection{AR GRU Architecture}

The architecture is a GRU with hidden states $\mathbf{h}_t \in \mathbb{R}^3$ and scalar inputs $x_t \in \mathbb{R}$.
However, at every point in the sequence $t,$ we readout the hidden state $h_t \in \mathbb{R}^3$ and use it to parameterize a mean $\mu_{t+1} \in \mathbb{R}$ and a variance $\sigma^2_{t+1} \in \mathbb{R}_+$.
We then sample $x_{t+1}$ according to $x_{t+1} \sim \mathcal{N}(\mu_{t+1}, \sigma_{t+1}^2)$; this \emph{output} $x_{t+1}$ is then fed into as the \emph{input} to the AR GRU at time step $t+1$ to make the new hidden step $\mathbf{h}_{t+1}$.

This AR GRU is trained using standard sequential evaluation and backpropagation-through-time to produce a noisy sine wave of length 10,000.
We train the AR GRU on 1024 traces $\mathbf{x}_{1:T}$ generated from a sine wave with amplitude 10 and white noise applied to each time step, and the training objective is to minimize the the negative log probability of the $\mathbf{x}_{1:T}$.

Once the AR GRU has been trained, it can generate its own trace $\tilde{\mathbf{x}}_{1:T}$ given an initial hidden state $\mathbf{h}_0$ and noises $\boldsymbol\epsilon_{1:T}$.

We note that such a system is Markovian with dimension $D = \dim(\mathbf{h}) + \dim(x)$, as together the hidden state $\mathbf{h}_t$ and output $x_{t+1}$ determine the next hidden state $\mathbf{h}_{t+1}$ and output $x_{t+2}$. 
Thus, in the notation of Section~\ref{sec:background:statement}, a hidden state $\mathbf{s}_t$ of the Markovian state space model is $\mathbf{s}_t = (x_{t+1}, \mathbf{h}_t)$.
Therefore, we can apply fixed-point methods to try to find the correct trace $\mathbf{s}^*$ in a parallelized manner instead of autoregressively.

We note that one distinction of this set-up with respect to the notation developed in Section~\ref{sec:background:statement} is that the dynamics functions $f$ are effectively \emph{time-varying} because the way in which $x_{t+2}$ is generated from $(x_{t+1}, h_t)$ depends on the noise $\epsilon_{t+2}$, the value of which varies across the sequence.
However, all the results in the paper still apply after subsuming the input dependence into a time-varying dynamics function $f_t$.

\subsubsection{Wall-clock Time Benchmark}\label{app:argru_time}

The timing experiments were carried out as follows on an Nvidia A100 GPU with 80 GB of GPU memory, using python 3.9 and jax 0.4.11. Exact wall-clock times will vary depending on hardware, software, and implementation.

We ran sequential evaluation of the trained AR GRU to produce noisy sine waves of length $T=10,\!000$, as well as the four parallelized method we consider in this paper (DEER, quasi-DEER, ELK, and quasi-ELK).

We ran 20 different random seeds (which lead to different values of the $\boldsymbol\epsilon_{1:T}$ and therefore different nonlinear dynamics), and timed each for a total of 4 repetitions (i.e. 80 timing runs per method). We record the wall-clock time needed to evaluate the length $T$ sequence sequentially, as well as wall-clock time, divided by $T$ needed to run $T$ Newton iterations of each of the parallelized methods (thus, we obtain the time per Newton iteration for each of the parallelized methods). 

Over these 80 timing runs, the sequential evaluation took an average of $\textbf{96}$ milliseconds, with standard deviation of $1.55$ ms. 
We report the average time per Newton iteration, the total number of  iterations needed for convergence, and the total wall-clock time to convergence in Table~\ref{tab:timing}. 
Note that the third column of Table~\ref{tab:timing} is the product of the first two columns.

We effectively read the number of Newton iteration to convergence off of the graphs in Figure~\ref{fig:argru}, but find the number of Newton iterations to convergence to be quite stable across random seeds (see Figure \ref{fig:supp1}).

\begin{table}[ht]
  \caption{Time to evaluate a length $T=10,000$ trained AR GRU using sequential vs parallelized methods. We note the \texttt{dynamax} package~\citep{dynamax} we used for the parallel Kalman filter implementation in ELK is not optimized for speed, and hence these run times could be further improved.}
  \label{tab:timing}
  \centering
  {\small
  \begin{tabular}{lp{2cm}p{2cm}p{2cm}}
    \toprule
    Algorithm & Time per Newton step (ms, mean $\pm$ std) & Newton steps to convergence & Total time to convergence (ms) \\
    \midrule
    \multicolumn{4}{l}{\textbf{Sequential Evaluation}} \\
    \midrule
    Sequential & \multicolumn{1}{r}{N/A} & \multicolumn{1}{r}{N/A} & \multicolumn{1}{r}{$96$} \\
    \midrule
    \multicolumn{4}{l}{\textbf{Parallelized Methods}} \\
    \midrule
    DEER & \multicolumn{1}{r}{$0.282 \pm 0.0005$} & \multicolumn{1}{r}{$4449$} & \multicolumn{1}{r}{$1255$} \\
    Quasi-DEER & \multicolumn{1}{r}{$0.087 \pm 0.0002$} & \multicolumn{1}{r}{$7383$} & \multicolumn{1}{r}{$642$} \\
    ELK & \multicolumn{1}{r}{$3.600 \pm 0.0670$} & \multicolumn{1}{r}{$172$} & \multicolumn{1}{r}{$619$} \\
    Quasi-ELK & \multicolumn{1}{r}{$0.141 \pm 0.0004$} & \multicolumn{1}{r}{$1566$} & \multicolumn{1}{r}{$221$} \\
    \bottomrule
  \end{tabular}
  }
\end{table}
These timing results are illustrative of multiple themes of our paper.
We see that while the undamped Newton steps are individually faster because they are carrying out fewer computations (they are just computing a linear recurrence relation, or equivalently an undamped Newton step, instead of computing a filtering pass, or equivalently solving a trust region problem). 
However, because the undamped Newton methods are numerically unstable, they take dramatically more Newton steps to convergence.

Similarly, we see that the quasi methods are dramatically faster than their dense counterparts as they are replace $\mathcal{O}(D^3)$ matrix-matrix multiplication with $\mathcal{O}(D)$ diagonal matrix multiplication.
The $\mathcal{O}(D^3)$ work required by a parallel scan on a dense linear recurrence likely saturates the GPU). We see in Table~\ref{tab:timing} that individual steps in the dense DEER/ELK are (approximately) a factor of between 3.5 and 30 times slower \emph{per step} than their quasi  (diagonal) variants. However, they take a factor of between 2 and 10 fewer iterations. 

Thus, we find that our fastest parallelized method on wall-clock time is {quasi-ELK}, but even so it is approximately two times slower than sequentially evaluating this AR GRU. 
Therefore, an interesting direction for future work would be to characterize regimes where parallel methods can outperform sequential methods, and to investigate whether this autoregressive setting is such a regime, or whether parallelized methods can benefit from further speed-ups by leveraging adaptive trust region sizes, clever initialization strategies, or even more modern parallelized hardware.

\subsection{Setting the Hyperparameter for the AR GRU}\label{app:hparam}

We provide more details on how to set the hyperparameters for ELK. Figure~\ref{fig:supp1} shows how to set the hyperparameter for ELK in the context of the evaluating the AR GRU that generates a noisy sine wave (Figure~\ref{fig:argru}). 

We sweep over the hyperparamter for 15 different input sequences, and plot the median and quartiles of the cost to convergence in terms of Newton iterates and runtime (left column of Figure~\ref{fig:supp1}).
We see a bathtub curve: large $\lambda$ takes needlessly small steps, slowing progress; small $\lambda$ results in many resets, slowing convergence. Crucially, we see there is little variance across individual sequences. These results show that there is a well-behaved  dependence that can be optimized on a validation set with a simple 1-d grid search.

We also chart the approximation error against cost for the AR GRU (center and right column of Figure~\ref{fig:supp1}). We see that the approximation error reduces in fewer Newton steps with full DEER as opposed to quasi-DEER, but, crucially, the wall-clock time (the more important of the two metrics) is notably lower across all accuracies for quasi-DEER. This indicates that our more efficient – but approximate – quasi-DEER is broadly preferable to the more expensive – but exact – DEER updates. Furthermore, the stabilized ELK and quasi-ELK are better still. We also show the steps/time to convergence for a range of accuracy thresholds, and see that our methods outperform DEER across the full range of thresholds and metrics.

These experiments were run on a single Nvidia A100 with 80GB of onboard memory.

\begin{figure}[ht]
    \centering
    \includegraphics[width=\textwidth]{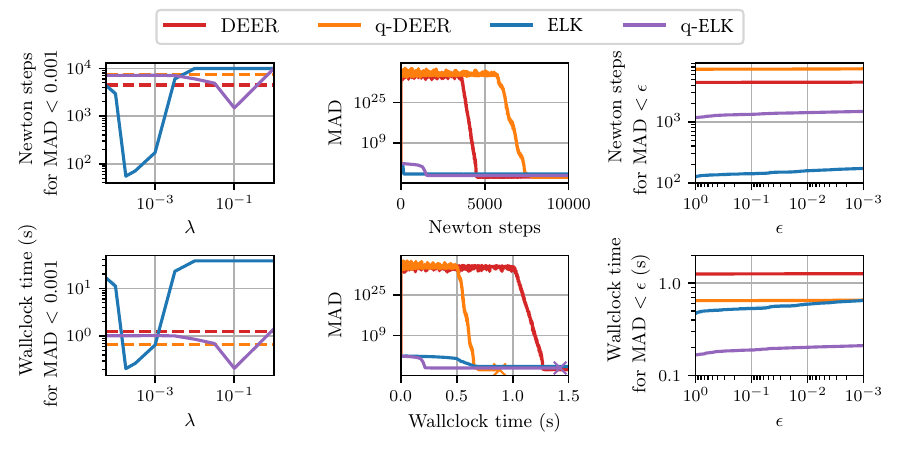}
    \caption{Experiment to show how to set the hyperparameters for (quasi)-ELK on the AR GRU pre-trained to generate a noisy sine wave (Figure~\ref{fig:argru} in the main text). Top row plots Newton steps; bottom row plots wall-clock time.  Lower is better for all plots.  (\textbf{Left})  median steps/time to convergence over $\lambda$ over $15$ sequences.
    Quartiles are shaded but are very small.
    DEER methods are independent of $\lambda$.  (\textbf{Center}) Updated version of Figure~\ref{fig:argru} instead plotting MAD as a function of wall-clock time.  (\textbf{Right}) Time to convergence is robust as a function of convergence threshold $\epsilon$.  Median and quartiles across 15 sequences are shown. DEER methods are nearly constant at the thresholds considered (very slight positive slope).  Note we plot for \emph{increasing} $\lambda$ corresponding to a smaller trust region, and \emph{reducing} $\epsilon$ corresponding to a tighter convergence threshold.
    }
    \label{fig:supp1}
\end{figure}

\subsection{Additional Experiment: Evaluating Chaotic Lorenz96 Systems}\label{app:lorenz}

We include an extra experiment where we tackle the parallel evaluation of the classic non-linear 5-dimensional Lorenz-96 system, with $F=8$ which results in chaotic dynamics. We seek to evaluate this system (for $T=1000$ timesteps) using (quasi)-DEER and (quasi)-ELK.
We directly use the Lorenz-96 dynamics as our nonlinear dynamics function $f$, i.e. the architecture/time evolution \emph{is} the Lorenz-96 ODE system. The state is the five-dimensional Lorenz system state. The input is therefore the initial condition of the ODE; and the outputs are the $T \times 5$ subsequent system states.

We demonstrate that all the parallelized methods converge to the correct trace, but that (quasi)-ELK is dramatically more stable at intermediate Newton iterations prior to convergence.
We see that DEER and ELK methods converge in a comparable number of steps (this makes sense as DEER is a special case of ELK for $\lambda \to 0$). DEER is faster (in terms of wall-clock time) because of the extra work done per ELK iteration. However, ELK has stabilized convergence, whereas DEER relies heavily on resetting. Interestingly we see that quasi is slower by all metrics, suggesting that the chaotic dynamics may require the more accurate updates. Quasi methods can be implemented to consume notably lower memory, however, and so may be preferable in certain circumstances.

In Figure~\ref{fig:supp2}, we report mean absolute deviation (MAD) of the time series at Newton iteration $(i)$ against the true state sequence. “Iteration” then refers to the number of Newton iterations, i.e. the number of updates applied to the entire state sequence. We set hyperparameters using 10 different evaluations of the Lorenz96 (i.e. starting from 10 different initial points).

These experiments were run on a single Nvidia A100 with 80GB of onboard memory.

\begin{figure}[ht]
    \centering
    \includegraphics[width=\textwidth]{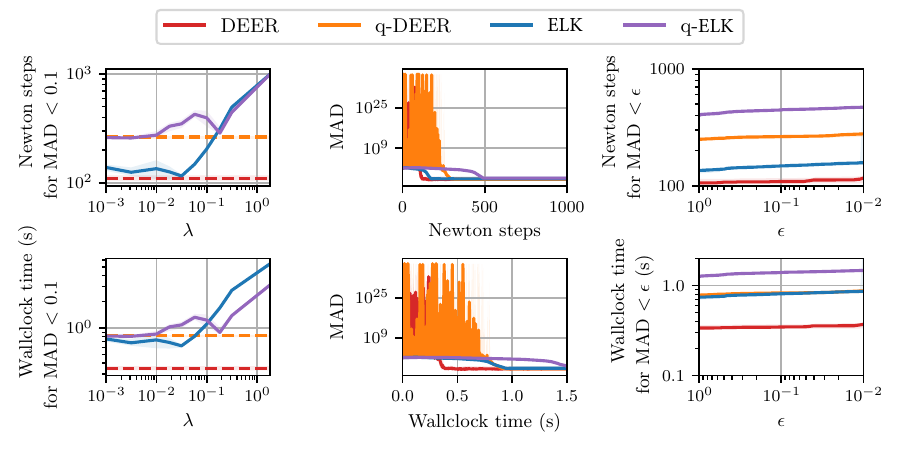}
    \includegraphics[width=\textwidth]{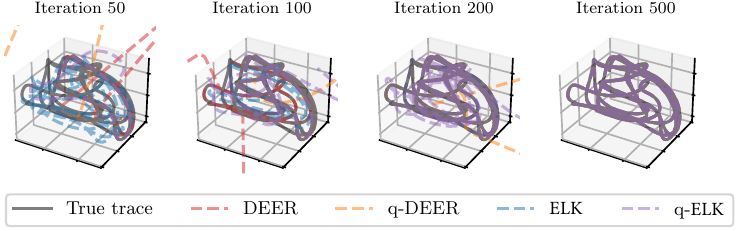}
    \caption{Evaluating the Lorenz96 system in parallel. \textbf{(Top two rows)}: Same format as Figure~\ref{fig:supp1}. \textbf{(Bottom row)}: Plot of Lorenz96 trajectory during optimization.  DEER methods are noticeably more unstable than ELK methods.
    }
    \label{fig:supp2}
\end{figure}

\subsection{Background on Parallel Scans}\label{app:scan}

For a more detailed reference on parallel scans, the interested reader should refer to Appendix H of \citet{smith2023s5} or to \citet{blelloch1990prefix}.

In our \href{https://github.com/lindermanlab/elk}{codebase}, we leverage \texttt{jax.lax.associative\_scan} with the correct binary associative operator. The binary associative operator for DEER and quasi-DEER is simply the composition of affine maps, while the binary associative operation for Kalman filtering can be found in \citet{parallel-kalman} and in \texttt{dynamax}~\citep{dynamax}.

\section{Additional Background on Newton's Method}\label{app:nt}

In this appendix, we provide additional background on Newton's method, and why it is of use for parallelizing nonlinear RNNs. 

Newton's method provably enjoys quadratic (very fast) convergence in a basin near the true solution. Moreover, as exhibited by the widespread usage of Newton's method across many domains, Newton's method can exhibit fast convergence in practice. However, a major motivation for this paper is that globally, Newton's method can be unstable and converge slowly. This instability is a major motivation for our development of ELK.

A core insight from \citet{deer2024} is that in the setting of evaluating RNNs, Newton's method can be cast as a parallel scan (called DEER). At each ``Newton iteration,'' DEER linearizes the nonlinear dynamics of the RNN it is evaluating. To the extent that linear approximations are a very powerful tool across a wide variety of domains (e.g. Taylor expansions), this linear approximation can be a good approximation, leading to rapid convergence. For example, if we were dealing with linear RNNs, DEER would converge in one Newton iteration. In this paper, we are instead dealing with nonlinear RNNs, so more Newton iterations are required.

\subsection{Newton's Method for Root-Finding}\label{app:nt_root}

We provide a brief discussion of Newton's method for root finding. A great resource for further study is \citet{NocedalWright}.

Let's say we want to find the solution $\mathbf{s}^*$ to the nonlinear equation $\mathbf{r}(\mathbf{s}) = 0$, and we have a guess $\mathbf{s}^{(i)}$ at iteration $i$. Newton's method linearizes $\mathbf{r}(\mathbf{s})$ at the guess $\mathbf{s}^{(i)}$, i.e.
        \begin{equation*}
            \hat{\mathbf{r}}(\mathbf{s}) :=  \mathbf{r}(\mathbf{s}^{(i)}) + \dfrac{\partial \mathbf{r}}{\partial \mathbf{s}}(\mathbf{s}^{(i)}) (\mathbf{s} - \mathbf{s}^{(i)}).
        \end{equation*}
Thus, we get our new guess $\mathbf{s}^{(i+1)}$ as the solution to $\hat{\mathbf{r}}(\mathbf{s}) = 0$. Therefore, $\mathbf{s}^{(i+1)}$ satisfies
        \begin{equation*}
            \mathbf{s}^{(i+1)} - \mathbf{s}^{(i)} = -\mathbf{J}^{-1} \mathbf{r}(\mathbf{s}^{(i)}),
        \end{equation*}
where we define $\mathbf{J} := \dfrac{\partial \mathbf{r}}{\partial \mathbf{s}}$.

\subsection{Newton, Gauss-Newton, Root-Finding, and Optimization}\label{app:root_v_opt}

In this paper, we seek to find the root of a nonlinear equation $\mathbf{r}(\mathbf{s}) = 0$. In Appendix~\ref{app:nt_root} we discuss how to use Newton's method for root finding to obtain the update
\begin{equation*}
    \mathbf{s}^{(i+1)} \leftarrow \mathbf{s}^{(i)} - \mathbf{J}^{-1} \mathbf{r}.
\end{equation*}
However, another approach is to consider minimizing the merit function $\mathcal{L}(\mathbf{s}) := \| \mathbf{r}(\mathbf{s}) \|_2^2 / 2$.
The root $\mathbf{s}^*$ of $\mathbf{r}$ will also minimize $\mathcal{L}(\mathbf{s})$, so the goal of root-finding to solve $\mathbf{r}(\mathbf{s}) = 0$ is the same as trying to find the minimize of $\mathcal{L}(\mathbf{s})$. However, if one applies Newton's method for optimization to try to minimize $\mathcal{L}(\mathbf{s})$ (see \citet{boyd2004convex} for a great reference on Newton's method and optimization), the update obtained is actually
\begin{equation*}
    \mathbf{s}^{(i+1)} \leftarrow \mathbf{s}^{(i)} - H(\mathbf{s}^{(i)})^{-1} g(\mathbf{s}^{(i)}),
\end{equation*}
where $\mathbf{H}$ is the Hessian of $\mathcal{L}$ and $\mathbf{g} = \mathbf{J}^T \mathbf{r}$ is the gradient of $\mathcal{L}$ with respect to $\mathbf{s}$.
The Gauss-Newton method approximates this optimization update for minimizing the merit function by making the approximation $\mathbf{H} \approx \mathbf{J}^T \mathbf{J}$, and so the Gauss-Newton update for minimizing the merit function ends up being the same as the Newton update for the finding the root of $\mathbf{r}$.

\subsection{Convergence of Newton's Method}

Newton’s method only converges within a suitable basin \cite[\S1.2.4, p.~37]{Nesterov2018}, but establishing best practices for initialization is an open problem.
For instance, Yap provides a bound on the norm of the basin \cite[Lecture IV, \S10, p.~174]{Yap1993}.
However, this definition requires bounding the derivative of the objective function, which is harder than the original problem. Nesterov derives a basin for quadratic convergence around the true solution  \cite[Thm 1.2.5, \S1.2.4, p.~39]{Nesterov2018}, but does not provide information on how to locate this basin \emph{a priori}. Indeed, Nesterov defaults to taking standard gradient steps early in optimization until you assume you are in the basin, and \emph{then} using Newton steps \cite[\S1.2.4, p.~39]{Nesterov2018}.

\section{Algorithm Block}

For the reader's convenience, we provide an algorithm block depicting ``the ungulates''\footnote{An ungulate is a large hooved mammal, and our affectionate term for DEER, ELK, and the quasi-variants} (parallelized RNN algorithms, i.e. DEER, ELK, and the quasi-variants).

\begin{algorithm}
\caption{ParallelizeRNN}
\begin{algorithmic}[1] % The number tells where the line numbering should start
\Procedure{ParallelizeRNN}{$f$, $s_0$, \text{init\_guess}, \text{tol}, \text{method}, \text{quasi}}
    \State $\text{diff} \gets \infty$
    \State $\text{states} \gets \text{init\_guess}$
    \While{$\text{diff} > \text{tol}$}
        \State $\text{shifted\_states} \gets [s_0, \text{states}[:-1]]$ 
        \State $fs \gets f(\text{shifted\_states})$
        \State \textcolor{red}{$Js \gets \textsc{GetJacobians}(f, \text{shifted\_states})$}
        \If{$\text{quasi}$}
            \State \textcolor{orange}{$Js \gets \textsc{Diag}(Js)$}
        \EndIf
        \State $bs \gets fs - Js \cdot \text{shifted\_states}$
        \If{$\text{method} = \text{`deer'}$}
            \State \textcolor{red}{$\text{new\_states} \gets \textsc{ParallelScan}(Js, bs, s_0)$}
        \ElsIf{$\text{method} = \text{`elk'}$}
            \State \textcolor{blue}{$\text{new\_states} \gets \textsc{ParallelKalmanFilter}(Js, bs, \text{states}, s_0)$}
        \EndIf
        \State $\text{diff} \gets \|\text{states} - \text{new\_states}\|_{\infty}$
        \State $\text{states} \gets \text{new\_states}$
    \EndWhile
    \State \Return $\text{states}$
\EndProcedure
\end{algorithmic}
\end{algorithm}

\newpage
\section*{NeurIPS Paper Checklist}

\begin{enumerate}

\item {\bf Claims}
    \item[] Question: Do the main claims made in the abstract and introduction accurately reflect the paper's contributions and scope?
    \item[] Answer: \answerYes{} % Replace by \answerYes{}, \answerNo{}, or \answerNA{}.
    \item[] Justification: The main claim in the abstract and introduction is that we improve the limitations of parallelizing the evaluation of nonlinear RNNs with Newton's method by addressing scalability and stability concerns.
    We develop quasi-DEER to address scalability concerns and ELK to address stability concerns. We combine them in quasi-ELK.
    \item[] Guidelines:
    \begin{itemize}
        \item The answer NA means that the abstract and introduction do not include the claims made in the paper.
        \item The abstract and/or introduction should clearly state the claims made, including the contributions made in the paper and important assumptions and limitations. A No or NA answer to this question will not be perceived well by the reviewers. 
        \item The claims made should match theoretical and experimental results, and reflect how much the results can be expected to generalize to other settings. 
        \item It is fine to include aspirational goals as motivation as long as it is clear that these goals are not attained by the paper. 
    \end{itemize}

\item {\bf Limitations}
    \item[] Question: Does the paper discuss the limitations of the work performed by the authors?
    \item[] Answer: \answerYes{} % Replace by \answerYes{}, \answerNo{}, or \answerNA{}.
    \item[] Justification: See limitations.
    \item[] Guidelines:
    \begin{itemize}
        \item The answer NA means that the paper has no limitation while the answer No means that the paper has limitations, but those are not discussed in the paper. 
        \item The authors are encouraged to create a separate "Limitations" section in their paper.
        \item The paper should point out any strong assumptions and how robust the results are to violations of these assumptions (e.g., independence assumptions, noiseless settings, model well-specification, asymptotic approximations only holding locally). The authors should reflect on how these assumptions might be violated in practice and what the implications would be.
        \item The authors should reflect on the scope of the claims made, e.g., if the approach was only tested on a few datasets or with a few runs. In general, empirical results often depend on implicit assumptions, which should be articulated.
        \item The authors should reflect on the factors that influence the performance of the approach. For example, a facial recognition algorithm may perform poorly when image resolution is low or images are taken in low lighting. Or a speech-to-text system might not be used reliably to provide closed captions for online lectures because it fails to handle technical jargon.
        \item The authors should discuss the computational efficiency of the proposed algorithms and how they scale with dataset size.
        \item If applicable, the authors should discuss possible limitations of their approach to address problems of privacy and fairness.
        \item While the authors might fear that complete honesty about limitations might be used by reviewers as grounds for rejection, a worse outcome might be that reviewers discover limitations that aren't acknowledged in the paper. The authors should use their best judgment and recognize that individual actions in favor of transparency play an important role in developing norms that preserve the integrity of the community. Reviewers will be specifically instructed to not penalize honesty concerning limitations.
    \end{itemize}

\item {\bf Theory Assumptions and Proofs}
    \item[] Question: For each theoretical result, does the paper provide the full set of assumptions and a complete (and correct) proof?
    \item[] Answer: \answerYes{} % Replace by \answerYes{}, \answerNo{}, or \answerNA{}.
    \item[] Justification: We prove Proposition~\ref{prop:deer_converges}.
    \item[] Guidelines:
    \begin{itemize}
        \item The answer NA means that the paper does not include theoretical results. 
        \item All the theorems, formulas, and proofs in the paper should be numbered and cross-referenced.
        \item All assumptions should be clearly stated or referenced in the statement of any theorems.
        \item The proofs can either appear in the main paper or the supplemental material, but if they appear in the supplemental material, the authors are encouraged to provide a short proof sketch to provide intuition. 
        \item Inversely, any informal proof provided in the core of the paper should be complemented by formal proofs provided in appendix or supplemental material.
        \item Theorems and Lemmas that the proof relies upon should be properly referenced. 
    \end{itemize}

    \item {\bf Experimental Result Reproducibility}
    \item[] Question: Does the paper fully disclose all the information needed to reproduce the main experimental results of the paper to the extent that it affects the main claims and/or conclusions of the paper (regardless of whether the code and data are provided or not)?
    \item[] Answer: \answerYes{} % Replace by \answerYes{}, \answerNo{}, or \answerNA{}.
    \item[] Justification: We provide experimental details to allow for reproducibility, including hardware used, in Section~\ref{sec:exp} and Appendix~\ref{app:exp}. We also provide our code at \url{https://github.com/lindermanlab/elk}
    \item[] Guidelines:
    \begin{itemize}
        \item The answer NA means that the paper does not include experiments.
        \item If the paper includes experiments, a No answer to this question will not be perceived well by the reviewers: Making the paper reproducible is important, regardless of whether the code and data are provided or not.
        \item If the contribution is a dataset and/or model, the authors should describe the steps taken to make their results reproducible or verifiable. 
        \item Depending on the contribution, reproducibility can be accomplished in various ways. For example, if the contribution is a novel architecture, describing the architecture fully might suffice, or if the contribution is a specific model and empirical evaluation, it may be necessary to either make it possible for others to replicate the model with the same dataset, or provide access to the model. In general. releasing code and data is often one good way to accomplish this, but reproducibility can also be provided via detailed instructions for how to replicate the results, access to a hosted model (e.g., in the case of a large language model), releasing of a model checkpoint, or other means that are appropriate to the research performed.
        \item While NeurIPS does not require releasing code, the conference does require all submissions to provide some reasonable avenue for reproducibility, which may depend on the nature of the contribution. For example
        \begin{enumerate}
            \item If the contribution is primarily a new algorithm, the paper should make it clear how to reproduce that algorithm.
            \item If the contribution is primarily a new model architecture, the paper should describe the architecture clearly and fully.
            \item If the contribution is a new model (e.g., a large language model), then there should either be a way to access this model for reproducing the results or a way to reproduce the model (e.g., with an open-source dataset or instructions for how to construct the dataset).
            \item We recognize that reproducibility may be tricky in some cases, in which case authors are welcome to describe the particular way they provide for reproducibility. In the case of closed-source models, it may be that access to the model is limited in some way (e.g., to registered users), but it should be possible for other researchers to have some path to reproducing or verifying the results.
        \end{enumerate}
    \end{itemize}

\item {\bf Open access to data and code}
    \item[] Question: Does the paper provide open access to the data and code, with sufficient instructions to faithfully reproduce the main experimental results, as described in supplemental material?
    \item[] Answer: \answerYes{} % Replace by \answerYes{}, \answerNo{}, or \answerNA{}.
    \item[] Justification: We provide our code at \url{https://github.com/lindermanlab/elk}
    \item[] Guidelines:
    \begin{itemize}
        \item The answer NA means that paper does not include experiments requiring code.
        \item Please see the NeurIPS code and data submission guidelines (\url{https://nips.cc/public/guides/CodeSubmissionPolicy}) for more details.
        \item While we encourage the release of code and data, we understand that this might not be possible, so “No” is an acceptable answer. Papers cannot be rejected simply for not including code, unless this is central to the contribution (e.g., for a new open-source benchmark).
        \item The instructions should contain the exact command and environment needed to run to reproduce the results. See the NeurIPS code and data submission guidelines (\url{https://nips.cc/public/guides/CodeSubmissionPolicy}) for more details.
        \item The authors should provide instructions on data access and preparation, including how to access the raw data, preprocessed data, intermediate data, and generated data, etc.
        \item The authors should provide scripts to reproduce all experimental results for the new proposed method and baselines. If only a subset of experiments are reproducible, they should state which ones are omitted from the script and why.
        \item At submission time, to preserve anonymity, the authors should release anonymized versions (if applicable).
        \item Providing as much information as possible in supplemental material (appended to the paper) is recommended, but including URLs to data and code is permitted.
    \end{itemize}

\item {\bf Experimental Setting/Details}
    \item[] Question: Does the paper specify all the training and test details (e.g., data splits, hyperparameters, how they were chosen, type of optimizer, etc.) necessary to understand the results?
    \item[] Answer: \answerYes{} % Replace by \answerYes{}, \answerNo{}, or \answerNA{}.
    \item[] Justification: We provide experimental details to allow for reproducibility, including hardware used, in Section~\ref{sec:exp} and Appendix~\ref{app:exp}. We provide out code \url{https://github.com/lindermanlab/elk}.
    \item[] Guidelines:
    \begin{itemize}
        \item The answer NA means that the paper does not include experiments.
        \item The experimental setting should be presented in the core of the paper to a level of detail that is necessary to appreciate the results and make sense of them.
        \item The full details can be provided either with the code, in appendix, or as supplemental material.
    \end{itemize}

\item {\bf Experiment Statistical Significance}
    \item[] Question: Does the paper report error bars suitably and correctly defined or other appropriate information about the statistical significance of the experiments?
    \item[] Answer: \answerYes{} % Replace by \answerYes{}, \answerNo{}, or \answerNA{}.
    \item[] Justification: Yes, we provide standard deviations in Table~\ref{tab:timing}. Sometimes we don't provide error bars when the runs are extremely similar (low variance).
    \item[] Guidelines:
    \begin{itemize}
        \item The answer NA means that the paper does not include experiments.
        \item The authors should answer "Yes" if the results are accompanied by error bars, confidence intervals, or statistical significance tests, at least for the experiments that support the main claims of the paper.
        \item The factors of variability that the error bars are capturing should be clearly stated (for example, train/test split, initialization, random drawing of some parameter, or overall run with given experimental conditions).
        \item The method for calculating the error bars should be explained (closed form formula, call to a library function, bootstrap, etc.)
        \item The assumptions made should be given (e.g., Normally distributed errors).
        \item It should be clear whether the error bar is the standard deviation or the standard error of the mean.
        \item It is OK to report 1-sigma error bars, but one should state it. The authors should preferably report a 2-sigma error bar than state that they have a 96\% CI, if the hypothesis of Normality of errors is not verified.
        \item For asymmetric distributions, the authors should be careful not to show in tables or figures symmetric error bars that would yield results that are out of range (e.g. negative error rates).
        \item If error bars are reported in tables or plots, The authors should explain in the text how they were calculated and reference the corresponding figures or tables in the text.
    \end{itemize}

\item {\bf Experiments Compute Resources}
    \item[] Question: For each experiment, does the paper provide sufficient information on the computer resources (type of compute workers, memory, time of execution) needed to reproduce the experiments?
    \item[] Answer: \answerYes{} % Replace by \answerYes{}, \answerNo{}, or \answerNA{}.
    \item[] Justification: Yes, we are very careful to specify the type of hardware, including memory capacity.
    \item[] Guidelines:
    \begin{itemize}
        \item The answer NA means that the paper does not include experiments.
        \item The paper should indicate the type of compute workers CPU or GPU, internal cluster, or cloud provider, including relevant memory and storage.
        \item The paper should provide the amount of compute required for each of the individual experimental runs as well as estimate the total compute. 
        \item The paper should disclose whether the full research project required more compute than the experiments reported in the paper (e.g., preliminary or failed experiments that didn't make it into the paper). 
    \end{itemize}
    
\item {\bf Code Of Ethics}
    \item[] Question: Does the research conducted in the paper conform, in every respect, with the NeurIPS Code of Ethics \url{https://neurips.cc/public/EthicsGuidelines}?
    \item[] Answer: \answerYes{} % Replace by \answerYes{}, \answerNo{}, or \answerNA{}.
    \item[] Justification: We have read and followd the NeurIPS code of ethics. We do not use human subjects; we use publicly available datasets; and our research is in ways to accelerate standard machine learning algorithms so their broader impacts are to allow current machine learning techniques to be more scalable and stable.
    \item[] Guidelines:
    \begin{itemize}
        \item The answer NA means that the authors have not reviewed the NeurIPS Code of Ethics.
        \item If the authors answer No, they should explain the special circumstances that require a deviation from the Code of Ethics.
        \item The authors should make sure to preserve anonymity (e.g., if there is a special consideration due to laws or regulations in their jurisdiction).
    \end{itemize}

\item {\bf Broader Impacts}
    \item[] Question: Does the paper discuss both potential positive societal impacts and negative societal impacts of the work performed?
    \item[] Answer: \answerYes{} % Replace by \answerYes{}, \answerNo{}, or \answerNA{}.
    \item[] Justification: We discuss how we make machine learning more scalable. Such scalability can lead to more energy efficient usage. Any negative impacts would occur from more efficient machine learning being used for pernicious ends.
    \item[] Guidelines:
    \begin{itemize}
        \item The answer NA means that there is no societal impact of the work performed.
        \item If the authors answer NA or No, they should explain why their work has no societal impact or why the paper does not address societal impact.
        \item Examples of negative societal impacts include potential malicious or unintended uses (e.g., disinformation, generating fake profiles, surveillance), fairness considerations (e.g., deployment of technologies that could make decisions that unfairly impact specific groups), privacy considerations, and security considerations.
        \item The conference expects that many papers will be foundational research and not tied to particular applications, let alone deployments. However, if there is a direct path to any negative applications, the authors should point it out. For example, it is legitimate to point out that an improvement in the quality of generative models could be used to generate deepfakes for disinformation. On the other hand, it is not needed to point out that a generic algorithm for optimizing neural networks could enable people to train models that generate Deepfakes faster.
        \item The authors should consider possible harms that could arise when the technology is being used as intended and functioning correctly, harms that could arise when the technology is being used as intended but gives incorrect results, and harms following from (intentional or unintentional) misuse of the technology.
        \item If there are negative societal impacts, the authors could also discuss possible mitigation strategies (e.g., gated release of models, providing defenses in addition to attacks, mechanisms for monitoring misuse, mechanisms to monitor how a system learns from feedback over time, improving the efficiency and accessibility of ML).
    \end{itemize}
    
\item {\bf Safeguards}
    \item[] Question: Does the paper describe safeguards that have been put in place for responsible release of data or models that have a high risk for misuse (e.g., pretrained language models, image generators, or scraped datasets)?
    \item[] Answer: \answerNA{} % Replace by \answerYes{}, \answerNo{}, or \answerNA{}.
    \item[] Justification: We have written an algorithms paper. We do not produce a pretrained language model, an image generator, or a scraped dataset.
    \item[] Guidelines:
    \begin{itemize}
        \item The answer NA means that the paper poses no such risks.
        \item Released models that have a high risk for misuse or dual-use should be released with necessary safeguards to allow for controlled use of the model, for example by requiring that users adhere to usage guidelines or restrictions to access the model or implementing safety filters. 
        \item Datasets that have been scraped from the Internet could pose safety risks. The authors should describe how they avoided releasing unsafe images.
        \item We recognize that providing effective safeguards is challenging, and many papers do not require this, but we encourage authors to take this into account and make a best faith effort.
    \end{itemize}

\item {\bf Licenses for existing assets}
    \item[] Question: Are the creators or original owners of assets (e.g., code, data, models), used in the paper, properly credited and are the license and terms of use explicitly mentioned and properly respected?
    \item[] Answer: \answerYes{} % Replace by \answerYes{}, \answerNo{}, or \answerNA{}.
    \item[] Justification: We cite \citet{deer2024}, whose work and code was an inspiration for this paper, and \citet{dynamax}, which we used to scaffold our implementation of ELK.
    \item[] Guidelines:
    \begin{itemize}
        \item The answer NA means that the paper does not use existing assets.
        \item The authors should cite the original paper that produced the code package or dataset.
        \item The authors should state which version of the asset is used and, if possible, include a URL.
        \item The name of the license (e.g., CC-BY 4.0) should be included for each asset.
        \item For scraped data from a particular source (e.g., website), the copyright and terms of service of that source should be provided.
        \item If assets are released, the license, copyright information, and terms of use in the package should be provided. For popular datasets, \url{paperswithcode.com/datasets} has curated licenses for some datasets. Their licensing guide can help determine the license of a dataset.
        \item For existing datasets that are re-packaged, both the original license and the license of the derived asset (if it has changed) should be provided.
        \item If this information is not available online, the authors are encouraged to reach out to the asset's creators.
    \end{itemize}

\item {\bf New Assets}
    \item[] Question: Are new assets introduced in the paper well documented and is the documentation provided alongside the assets?
    \item[] Answer: \answerYes{} % Replace by \answerYes{}, \answerNo{}, or \answerNA{}.
    \item[] Justification: We provide our code at \url{https://github.com/lindermanlab/elk}
    \item[] Guidelines:
    \begin{itemize}
        \item The answer NA means that the paper does not release new assets.
        \item Researchers should communicate the details of the dataset/code/model as part of their submissions via structured templates. This includes details about training, license, limitations, etc. 
        \item The paper should discuss whether and how consent was obtained from people whose asset is used.
        \item At submission time, remember to anonymize your assets (if applicable). You can either create an anonymized URL or include an anonymized zip file.
    \end{itemize}

\item {\bf Crowdsourcing and Research with Human Subjects}
    \item[] Question: For crowdsourcing experiments and research with human subjects, does the paper include the full text of instructions given to participants and screenshots, if applicable, as well as details about compensation (if any)? 
    \item[] Answer: \answerNA{} % Replace by \answerYes{}, \answerNo{}, or \answerNA{}.
    \item[] Justification: The paper does not involve crowdsourcing nor research with human subjects.
    \item[] Guidelines:
    \begin{itemize}
        \item The answer NA means that the paper does not involve crowdsourcing nor research with human subjects.
        \item Including this information in the supplemental material is fine, but if the main contribution of the paper involves human subjects, then as much detail as possible should be included in the main paper. 
        \item According to the NeurIPS Code of Ethics, workers involved in data collection, curation, or other labor should be paid at least the minimum wage in the country of the data collector. 
    \end{itemize}

\item {\bf Institutional Review Board (IRB) Approvals or Equivalent for Research with Human Subjects}
    \item[] Question: Does the paper describe potential risks incurred by study participants, whether such risks were disclosed to the subjects, and whether Institutional Review Board (IRB) approvals (or an equivalent approval/review based on the requirements of your country or institution) were obtained?
    \item[] Answer: \answerNA{} % Replace by \answerYes{}, \answerNo{}, or \answerNA{}.
    \item[] Justification: We do not use crowdsourcing nor research with human subjects.
    \item[] Guidelines:
    \begin{itemize}
        \item The answer NA means that the paper does not involve crowdsourcing nor research with human subjects.
        \item Depending on the country in which research is conducted, IRB approval (or equivalent) may be required for any human subjects research. If you obtained IRB approval, you should clearly state this in the paper. 
        \item We recognize that the procedures for this may vary significantly between institutions and locations, and we expect authors to adhere to the NeurIPS Code of Ethics and the guidelines for their institution. 
        \item For initial submissions, do not include any information that would break anonymity (if applicable), such as the institution conducting the review.
    \end{itemize}

\end{enumerate}

\end{document}